%% file: neurips_2026.tex
\documentclass{article}

    \PassOptionsToPackage{round}{natbib}

\usepackage[utf8]{inputenc} %
\usepackage[T1]{fontenc}    %
\usepackage[linktoc=page]{hyperref}       %
\hypersetup{
    colorlinks=true,
    linkcolor=deepblue,
    filecolor=magenta,
    urlcolor=deepblue,
    citecolor=deepblue
    }
\usepackage{url}            %
\usepackage{booktabs}       %
\usepackage{amsfonts}       %
\usepackage{nicefrac}       %
\usepackage{microtype}      %
\usepackage{xcolor}         %
\usepackage{listings}
\usepackage{amsmath}
\usepackage{graphicx}
\usepackage{natbib}
\usepackage{amsthm}
\usepackage{float}
\usepackage{amssymb}
\usepackage{wrapfig}
\usepackage{fontawesome5}
\usepackage{etoc}
\usepackage[most]{tcolorbox}
\newtcolorbox{thesisbox}{
  enhanced, frame hidden,
  borderline west={4pt}{0pt}{black!75},  %
  borderline west={4pt}{0pt}{boxframe},
  colback=boxfill,
  sharp corners,
  boxsep=0pt,
  left=12pt, right=12pt, top=8pt, bottom=8pt,
}
\usepackage[dvipsnames]{xcolor}
\definecolor{deepblue}{HTML}{211bad}
\usepackage{ulem}
\usepackage[tableposition=top]{caption}
\floatstyle{plaintop}
\restylefloat{table}
\usepackage{tabularx}

\usepackage{marginnote}
\newcommand{\defterm}[1]{\emph{#1}\marginnote{#1}}     %

\usepackage{needspace}

\definecolor{boxfill}{HTML}{FDF2F8}
\definecolor{boxframe}{HTML}{B5739D}

\newtheorem{proposition}{Proposition}
\newtheorem{definition}{Definition}
\newtheorem{corollary}{Corollary}

 \usepackage{ovmi_preprint}

\title{A Common Measure of Communication \\ for Speech Brain--Computer Interfaces}

\makeatletter
\@ifpackageloaded{ovmi_preprint}{
  \author{%
  \textbf{Dulhan Jayalath} \qquad
  \textbf{Benjamin Ballyk} \qquad
  \textbf{Oiwi Parker Jones}\\[2mm]
  {\small Neural Processing Lab (PNPL\includegraphics[height=2.2ex]{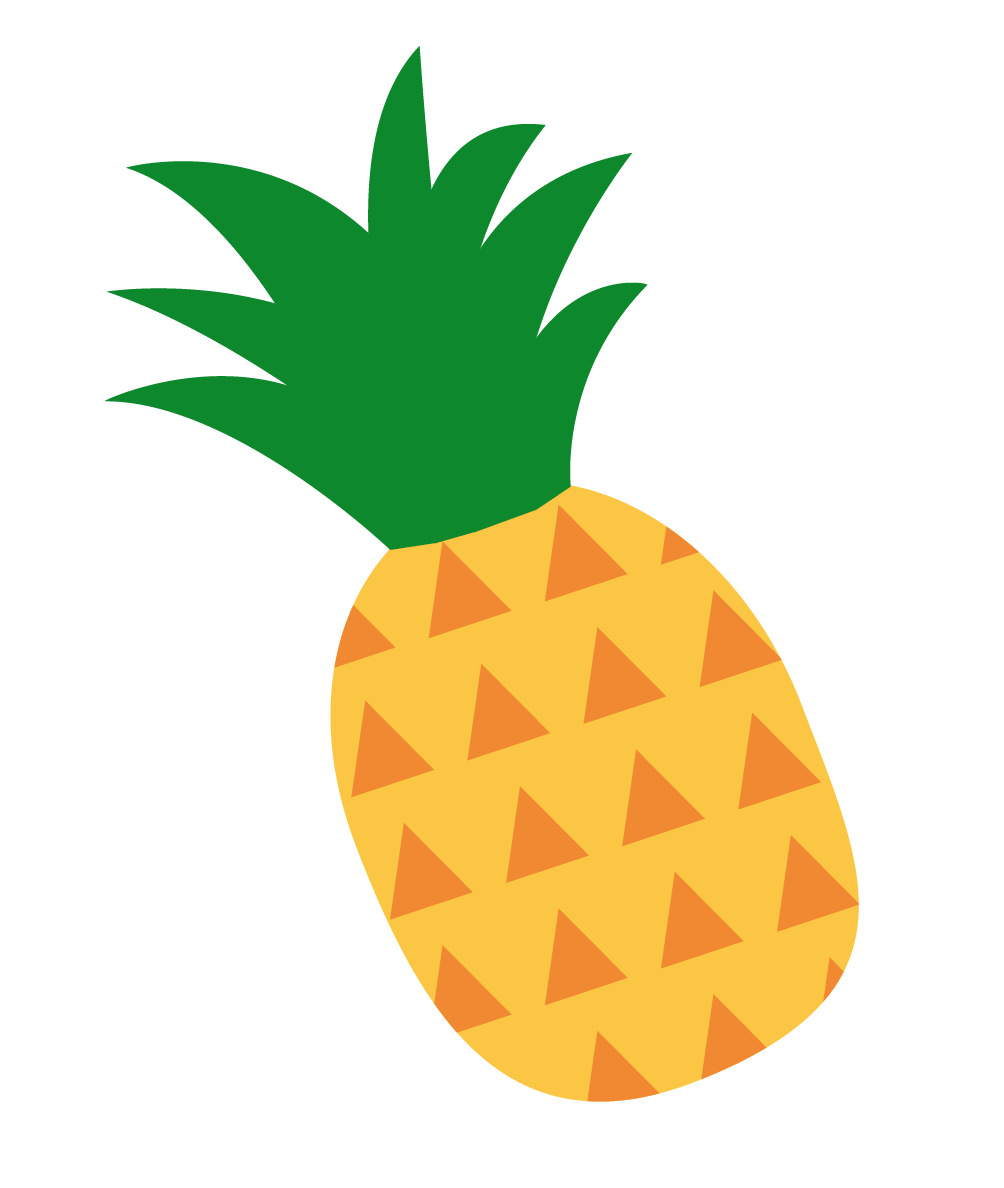}), University of Oxford}\\[3mm]
}
}{
\author{%
  Dulhan Jayalath \quad Benjamin Ballyk \quad Oiwi Parker Jones \\
  PNPL\includegraphics[height=2.2ex]{pnpl.png}, University of Oxford \\
  \texttt{\{dulhan, oiwi\}@robots.ox.ac.uk} \\
    }
}
\makeatother

\begin{document}

\maketitle

\begin{abstract}

    {

\looseness=-1
Speech brain--computer interfaces (speech BCIs) translate neural activity into language, offering a path towards restoring speech for people with paralysis and, more broadly, enabling new forms of natural human--computer interaction.
Despite this promise, the field lacks a common measure of progress because systems use different datasets, recording methods, types of speech, and vocabularies, so their reported scores are rarely comparable.
Underlying this measurement problem are two unresolved questions: (i) what distribution of words should a speech BCI enable a user to communicate, and (ii)~how much information from this distribution can a system convey. We address both by deriving \textit{open-vocabulary mutual information}~(OVMI), an information-theoretic quantity that measures the information conveyed by a decoder relative to a reference
distribution over the words a user may wish to communicate.
This allows capabilities measured under different conditions, such as distinct vocabularies, to be evaluated on a common communication scale.
We show that ordinarily reported accuracy, word error rate (WER), and other 
metrics computed only over the words a system supports
can overstate 
how much of a user's intended speech the system can communicate. We then
use OVMI to compare existing systems, expose trade-offs between how much of the user's language a system supports and how accurately it decodes those words, show that these comparisons depend on what the user is expected to communicate, and demonstrate that selecting a vocabulary to maximise OVMI yields up to 16.3\% relative improvement in accuracy across three speech domains.
OVMI therefore provides the speech BCI community with a principled way to compare heterogeneous systems, improve vocabulary design, and measure progress in the field.

\begin{center}
\vspace{1em}
\textbf{OVMI explorer} \faGlobe\, \href{https://neural-processing-lab.github.io/OVMI/}{\textcolor{black}{neural-processing-lab.github.io/OVMI/}} \\
\textbf{Python package} \faGithub\, \href{https://github.com/neural-processing-lab/OVMI}{\textcolor{black}{github.com/neural-processing-lab/OVMI}}
\end{center}

    }

\end{abstract}

\input{sections/intro_new}

\section{Information-Theoretic Evaluation in BCIs}
\label{sec:background}

\looseness=-1 The standard information metric in BCIs is the information transfer rate (ITR) popularised by \citet{Wolpaw1998EEGbasedCI, Wolpaw2002BraincomputerIF} and quoted in recent work \citep{Moses2021NeuroprosthesisFD,perkins2025sonic,neuralink2026}.
Although ITR is reported in bits per minute, Wolpaw's formulation first computes information per trial and then multiplies by the trial rate. We work at the per-trial level throughout since our primary interest is the information conveyed by each decoding attempt independently of communication speed.

\looseness=-1 With notation summarised in Appendix~\ref{app:vars}, consider a decoder with a finite vocabulary \(S\) of size \(V\). On each trial, the user intends a symbol \(X\in S\), and the decoder outputs a symbol \(Y\in S\). The quantity of interest is the mutual information \(I(X;Y)\), which measures how much observing the output \(Y\) tells us about the intended symbol \(X\). Wolpaw's derivation assumes all intended symbols are equally likely, \(p(x)=1/V\), and the decoder has symmetric errors, meaning it is correct with probability \(P\) and otherwise distributes errors uniformly over the remaining \(V-1\) symbols. Under these assumptions (derivation in Appendix~\ref{app:wolpaw}) the mutual information is
\begin{equation}
I_{\mathrm{Wolpaw}}(X;Y)
=
\log_2 V
+
P\log_2 P
+
(1-P)\log_2\!\left(\frac{1-P}{V-1}\right).
\label{eq:wolpaw-mi}
\end{equation}
The uniform prior assumption is restrictive for natural language, where words are non-uniform and Zipfian \citep{Zipf1949HumanBA}. \citet{Speier2013EvaluatingTB} addressed the problem in P300 character spellers \citep{Farwell1988TalkingOT} by replacing the uniform prior with empirical character frequencies. More recently, \citet{antonello2024many} proposed a method for information-based evaluation of continuous semantic decoding \citep{Tang2022SemanticRO}. However, these formulations still measure information within a predefined set of symbols and do not account for whether that set can represent all the symbols a user may wish to communicate.

\section{Open-Vocabulary Mutual Information}
\label{sec:ovmi-theory}

\looseness=-1
A speech BCI can convey an intended word only if the word is supported by the decoder and can be decoded reliably. 
Conventional accuracy, WER, and in-vocabulary mutual information measure only the latter, whether a decoder can reliably decode words, conditional on the intended word belonging to the decoder vocabulary \(S\). This is appropriate for a
classification task, but not for measuring communication when a user may wish
to say words outside of the supported vocabulary \(S\).

OVMI therefore evaluates the decoder relative to an external reference distribution $p$ over the words a user may wish to communicate. Its central idea is to weight information conveyed among supported words by the probability
that an intended word drawn from $p$ is supported. Thus, two decoders with identical
in-vocabulary performance can differ in communicative capability according to OVMI if their
vocabularies cover different amounts of \(p\). The toy example below illustrates a more extreme case.

\vspace{0.5\baselineskip}
\begin{center}
\begingroup
\setlength{\fboxsep}{7pt}
\fbox{%
\begin{minipage}{0.88\linewidth}
\small
\textbf{Toy example: accuracy and OVMI can rank systems differently.}

\medskip
Suppose a user may intend any of 1,000 equally likely words.

\medskip
\centering
\renewcommand{\arraystretch}{1.12}
\begin{tabular}{lcc}
\toprule
 & \textbf{System A} & \textbf{System B} \\
\midrule
Vocabulary size               & 50            & 1,000 \\
In-vocabulary accuracy        & \textbf{100\%} & 50\% \\
Lexical coverage              & 5\%           & \textbf{100\%} \\
OVMI                           & \textbf{0.28 bits} & \textbf{3.98 bits} \\
\bottomrule
\end{tabular}

\medskip
\raggedright
Despite perfect in-vocabulary decoding, System A can represent only \(5\%\) of
what the user may wish to say. System B is less accurate but supports every
intended word. Accuracy therefore ranks A above B, whereas OVMI ranks B above A
by accounting for representability and decoding fidelity.
\end{minipage}%
}
\endgroup
\end{center}
\vspace{0.5\baselineskip}

\subsection{Formal Definition}

Let \(p\) be a reference distribution over a lexicon \(\Omega\), and let \(S\subset \Omega\) denote the decoder vocabulary, with \(|S|=V\). For \(X\sim p\), define the \textit{lexical coverage}
$
C(S)=\Pr(X\in S)=\sum_{w\in S} p(w),
$
the probability that an intended word is supported by the decoder. For explicitness, we assume that if \(X\notin S\), then the user \textit{abstains} from decoding, i.e. does not attempt to think or say the word, and the observable output is \(Y=\varnothing\). If \(X\in S\), the decoder produces some \(Y\in S\). To denote whether the intended word is supported, i.e. whether \(X\in S\), we use the indicator
$
Z=\mathbf 1[X\in S]=\mathbf 1[Y\neq \varnothing].
$
\vspace{0.5\baselineskip}
\begin{proposition}[Decomposition of mutual information]
\label{prop:ovmi-decomp}
Under this model,
\begin{equation}
I(X;Y)
=
H_2(C(S))
+
I(X;Y\mid Z)
=
H_2(C(S))
+
C(S)\,I(X;Y\mid X\in S),
\label{eq:open-mi-decomp}
\end{equation}
where \(H_2(p)=-p\log_2 p-(1-p)\log_2(1-p)\) is the binary entropy. Proof in Appendix~\ref{app:ovmi-proofs}.
\end{proposition}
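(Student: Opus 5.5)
Since \(Z=\mathbf 1[Y\neq\varnothing]\) is a deterministic function of \(Y\), and also of \(X\) because \(Z=\mathbf 1[X\in S]\), I would prove Proposition~\ref{prop:ovmi-decomp} with the chain rule for mutual information. The plan is to expand \(I(X;(Y,Z))\) in two ways. Because \(Z\) is a function of \(Y\), we have \(I(X;Y)=I(X;Y,Z)\). The chain rule then gives
\begin{equation*}
I(X;Y,Z)=I(X;Z)+I(X;Y\mid Z).
\end{equation*}
Since \(Z\) is also a function of \(X\), \(H(Z\mid X)=0\), so \(I(X;Z)=H(Z)-H(Z\mid X)=H(Z)\). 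Finally, \(Z\) is Bernoulli with parameter \(\Pr(X\in S)=C(S)\), so \(H(Z)=H_2(C(S))\). This yields the first equality in \eqref{eq:open-mi-decomp}.

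For the second equality, I would expand the conditional mutual information over the two values of \(Z\):
\begin{equation*}
I(X;Y\mid Z)=\Pr(Z=1)\,I(X;Y\mid Z=1)+\Pr(Z=0)\,I(X;Y\mid Z=0).
\end{equation*}
On the event \(Z=0\), i.e.\ \(X\notin S\), the model forces \(Y=\varnothing\) deterministically. Hence \(H(Y\mid Z=0)=0\), which gives \(I(X;Y\mid Z=0)=0\). On the event \(Z=1\), the conditional law of \((X,Y)\) is exactly the in-vocabulary joint law. Here \(X\) is distributed as \(p\) restricted to \(S\) and renormalized by \(C(S)\), and \(Y\) follows the decoder's channel. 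So \(I(X;Y\mid Z=1)=I(X;Y\mid X\in S)\). With \(\Pr(Z=1)=C(S)\), this gives the claim.

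The main obstacle is not analytic but definitional. One must state precisely that \(I(X;Y\mid X\in S)\) denotes the mutual information under the conditional joint distribution given the event. One must also check that the decoder's channel \(p(y\mid x)\) for \(x\in S\) is unaffected by conditioning. This holds because the event \(\{X\in S\}\) depends only on \(X\).

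The degenerate cases \(C(S)\in\{0,1\}\) need a brief remark. There the corresponding conditional terms are multiplied by zero, and we take \(H_2(0)=H_2(1)=0\) with the convention \(0\log 0=0\). I would make these conventions explicit and leave the rest as the two-line chain-rule computation above.
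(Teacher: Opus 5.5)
Your proposal is correct and follows essentially the same route as the paper's proof: adjoin $Z$ to $Y$, apply the chain rule, use $H(Z\mid X)=0$ to get $I(X;Z)=H_2(C(S))$, and expand $I(X;Y\mid Z)$ over $Z\in\{0,1\}$. The $Z=0$ term vanishes because $Y=\varnothing$ deterministically there. Your remarks on conditioning on the event $\{X\in S\}$ and on the degenerate cases $C(S)\in\{0,1\}$ are sensible additions that the paper leaves implicit.
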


\begin{tcolorbox}[colback=boxfill, colframe=boxframe]
\begin{definition}[Open-vocabulary mutual information]
\label{def:ovmi}
We define
\begin{equation}
I_{\mathrm{OVMI}}(S)
:=
I(X;Y\mid Z)
=
C(S)\,I(X;Y\mid X\in S).
\label{eq:ovmi-def}
\end{equation}
\end{definition}
\textbf{Intuition.} How much lexical information is transferred per word the user would want to say?
\end{tcolorbox}

\looseness=-1 %
OVMI is a component of mutual information, stating the information conveyed among supported words, weighted by how
often a word drawn from the reference distribution is supported. The remaining
term \(H_2(C(S))\) reveals whether the intended word lies inside the decoder
vocabulary. We exclude it because it contains no information about a word's identity and counting it would assign information to detecting if a word is in the vocabulary even if the decoder is not able to decode the word.

\looseness=-1 \paragraph{Relationship to existing metrics.} When \(C(S)=1\), OVMI reduces to ordinary mutual information within the decoder
vocabulary. If the intended words are additionally uniform and decoding errors
are symmetric, it reduces to the per-trial information quantity underlying
Wolpaw's ITR
\citep{Wolpaw1998EEGbasedCI,Wolpaw2002BraincomputerIF}. For a trial rate \(r\),
\(rI_{\mathrm{OVMI}}\) therefore gives the corresponding open-vocabulary
information-transfer rate, with conventional Wolpaw ITR as a special case.
Accuracy and WER likewise measure decoding performance within the evaluated
vocabulary but do not account for the probability that an intended word is
supported. We analyse the impact of this in Section~\ref{sec:overestimate}.

\subsection{Estimating OVMI}

\paragraph{General estimator.} When a full confusion matrix of decoding errors is available, OVMI may be computed from the decoder's in-vocabulary channel, $K_S(y \mid x)=\Pr(Y=y\mid X=x)$. 
$K_S$ may be estimated by normalising the rows of the decoder's confusion matrix. OVMI weights this quantity by the reference distribution restricted to $S$. We give the full expression in Appendix~\ref{app:prop2} with Proposition~\ref{prop:ovmi-general}. 

\paragraph{Scalar estimator.} However, most existing speech BCI studies report only a scalar accuracy or WER, and large confusion matrices are often poorly estimated from limited test data.
We therefore use a Wolpaw-like approximation in which every supported word is decoded correctly with probability $P$ and errors are distributed uniformly. We define this next in Corollary~\ref{cor:ovmi-homogeneous}. We also provide a derivation without this assumption in Appendix~\ref{app:ovmi-proofs}.

\vspace{0.5\baselineskip}
\begin{corollary}[Scalar accuracy OVMI estimator]
\label{cor:ovmi-homogeneous}
Suppose that, for every intended word \(x\in S\), the decoder outputs the correct word with probability \(P\). Conditional on an error, it distributes the remaining probability uniformly across the other \(V-1\) words. Thus
\[
K_S(y\mid x)=
\begin{cases}
P, & y=x,\\[2mm]
\dfrac{1-P}{V-1}, & y\neq x.
\end{cases}
\]

Under this model, the output distribution on \(S\) is
\begin{equation}
q_S(j)
=
P\,p_S(j)
+
\frac{1-P}{V-1}\bigl(1-p_S(j)\bigr),
\qquad j\in S.
\label{eq:qS-homogeneous}
\end{equation}
Consequently (with proof in Appendix~\ref{app:ovmi-proofs}),
\begin{equation}
I_{\mathrm{OVMI}}(S)
=
C(S)\Bigl[H(q_S)-h_V(P)\Bigr].
\label{eq:ovmi-homogeneous}
\end{equation}
\end{corollary}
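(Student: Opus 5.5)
The plan is to reduce everything to a computation inside the in-vocabulary channel, using Definition~\ref{def:ovmi}. That definition gives $I_{\mathrm{OVMI}}(S)=C(S)\,I(X;Y\mid X\in S)$. It therefore suffices to show that, conditional on $X\in S$, the mutual information equals $H(q_S)-h_V(P)$. Here $p_S(x)=p(x)/C(S)$ is the reference distribution restricted to $S$ and renormalised, which is the law of $X$ given $X\in S$. I read $h_V(P)$ as the entropy of one row of $K_S$:
\[
h_V(P) = -P\log_2 P-(1-P)\log_2\!\left(\frac{1-P}{V-1}\right),
\]
which is the quantity appearing in Wolpaw's formula~\eqref{eq:wolpaw-mi}. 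Conditional on $X\in S$, the pair $(X,Y)$ has joint law $p_S(x)K_S(y\mid x)$ on $S\times S$. I will write $I(X;Y\mid X\in S)=H(Y\mid X\in S)-H(Y\mid X, X\in S)$ and treat the two terms separately.

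\textbf{Output distribution.} First I derive \eqref{eq:qS-homogeneous} by marginalising:
\[
q_S(j)=\sum_{x\in S}p_S(x)K_S(j\mid x)=P\,p_S(j)+\frac{1-P}{V-1}\sum_{x\ne j}p_S(x).
\]
Since $p_S$ sums to one on $S$, we have $\sum_{x\neq j}p_S(x)=1-p_S(j)$. This gives \eqref{eq:qS-homogeneous}, and hence $H(Y\mid X\in S)=H(q_S)$.

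\textbf{Conditional entropy.} For each fixed $x\in S$, the row $K_S(\cdot\mid x)$ has one entry equal to $P$ and $V-1$ entries equal to $(1-P)/(V-1)$. Its entropy is therefore $h_V(P)$, and this value does not depend on $x$. Averaging over $p_S$ gives $H(Y\mid X,X\in S)=h_V(P)$. Subtracting and multiplying by $C(S)$ yields \eqref{eq:ovmi-homogeneous}.

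The only delicate point is bookkeeping: the conditioning on $X\in S$ must use the renormalised $p_S$, not the raw $p$. This is what makes $q_S$ a genuine distribution and produces the $C(S)$ factor exactly once. I also need the convention $0\log 0=0$ for the edge cases $P\in\{0,1\}$ and $V=1$; in those cases $h_V$ should be interpreted by continuity.
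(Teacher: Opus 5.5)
Your proof is correct and matches the paper's: the paper cites Proposition~\ref{prop:ovmi-general} for $I_{\mathrm{OVMI}}(S)=C(S)\bigl[H(q_S)-\sum_{x\in S}p_S(x)H(K_S(\cdot\mid x))\bigr]$, whereas you re-derive that identity inline from Definition~\ref{def:ovmi}; both then compute $q_S$ by marginalisation and note that every row of $K_S$ has entropy $h_V(P)$, so the $p_S$-average reduces to $h_V(P)$. Your $V=1$ edge-case remark is moot, since the channel as stated already requires $V\ge 2$.
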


\paragraph{Defining $P$.} We instantiate the scalar $P$ in Eq.~\eqref{eq:ovmi-homogeneous} as macro accuracy $P_\mathrm{macro}$, the uniform average of per-word correct-decoding probabilities over $S$, rather than as a frequency-weighted (micro) average. 
The non-uniform source distribution is already modelled
by \(p\) and enters through \(C(S)\), \(p_S\), and
the output distribution \(q_S\). Micro accuracy weights classes according to the evaluation set's frequency distribution, whereas source weighting in OVMI should be determined by the external reference $p$. Macro accuracy avoids this by expressing how reliably the decoder distinguishes an average symbol in $S$, leaving frequency to enter through the source distribution.

\paragraph{Which estimator to use.} We adopt the scalar form (with pseudocode in Appendix~\ref{app:ovmi-pseudocode}) unless otherwise stated because it admits comparison with prior work that reports only accuracy and vocabulary size.
When reliable
per-word accuracy estimates are available, OVMI can be computed using the
word-specific variant in Appendix~\ref{app:ovmi-perword}; when a reliable
empirical confusion matrix is available, it should be computed directly
from Proposition~\ref{prop:ovmi-general} (statement and proof in Appendix~\ref{app:ovmi-proofs}). In the non-invasive settings considered here, however, the \(O(|S|^2)\)
parameters of a confusion matrix are often poorly estimated under the small, long-tailed held-out sets typical of
naturalistic neural recordings \citep{Hamilton2018TheRW}.

\looseness=-1

\section{Common Metrics Overestimate Open-Vocabulary Performance}
\label{sec:overestimate}

Before comparing empirical speech decoders, we first isolate the error introduced by the metric itself. Consider a noiseless decoder ($P=1$) whose vocabulary $S$ consists of the top-$V$ most frequent words under the reference distribution. With Wolpaw's formulation, such a decoder conveys $\log_2 V$ bits because the $V$ supported words are assumed to be equally likely. Accounting for their actual, non-uniform frequencies reduces this quantity to the in-vocabulary entropy $H(p_S)$, but this assumes that the words a user intends to communicate always lie within $S$. OVMI corrects this misalignment by accounting for the probability that an intended word is supported at all, giving $C(S)H(p_S)$ with an ideal, noiseless decoder. 

Figure~\ref{fig:idealdecomp}a shows that these three quantities diverge substantially.
Of the three, OVMI is the most conservative measure of information transfer, as $C(S)H(p_S) \le H(p_S) \le \log_2(V)$, with equality on the left when $S$ provides perfect coverage, $C(S)=1$, of the reference distribution, and equality on the right when $S$ is uniformly distributed.
Especially for small vocabularies, open-vocabulary metrics reveal that a system may convey only a small fraction of the information suggested by in-vocabulary entropy.

\begin{figure}
    \centering
    \includegraphics[width=1.0\linewidth, trim=0 0.4cm 0 0.25cm,clip]{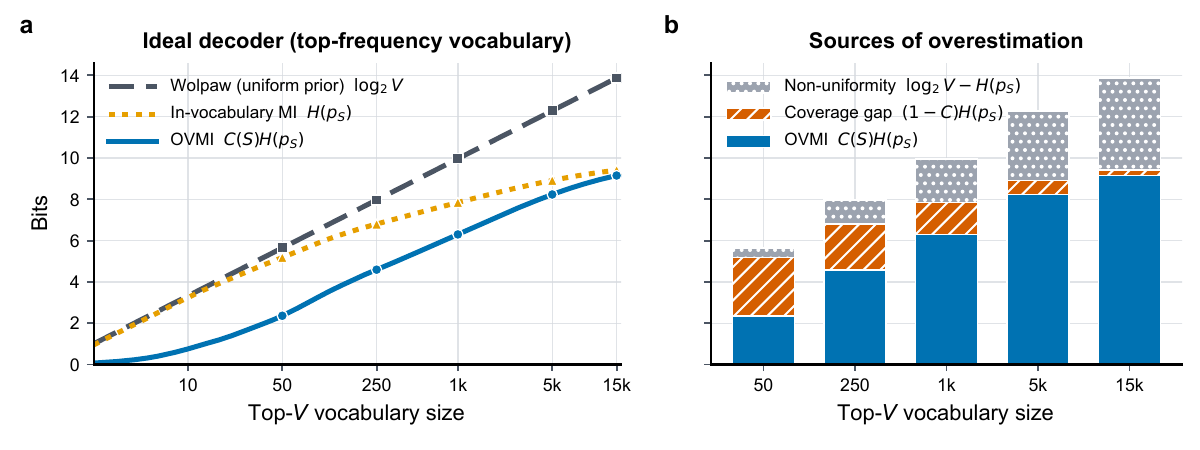}
    \caption{\textbf{In-vocabulary measures overestimate open communication.}
\textbf{(a)} Information attributed to a noiseless decoder ($P=1$) as the vocabulary size $V$ increases, with vocabularies formed from the top-$V$ most frequent words in $p$. Wolpaw (uniform prior) assigns $\log_2 V$ bits, accounting for non-uniform word frequencies gives $H(p_S)$, and OVMI further accounts for coverage, yielding $C(S)H(p_S)$. The distribution $p$ is computed from a spoken-language frequency norm derived from film and television subtitles \citep{vanHeuven2014SubtlexUKAN}, representing broad spoken English.
\textbf{(b)} Decomposition of the uniform-prior quantity,
$\log_2 V = C(S)H(p_S) + (1-C(S))H(p_S) + [\log_2 V - H(p_S)]$,
into OVMI, missing coverage, and the uniform-prior assumption.
}
    \label{fig:idealdecomp}
\end{figure}

\paragraph{What causes overestimation?} Figure~\ref{fig:idealdecomp}b separates the discrepancy into its two sources. For a noiseless decoder, the Wolpaw uniform-prior quantity may be decomposed as
\begin{equation}
\log_2 V
=
\underbrace{C(S)H(p_S)}_{\text{OVMI}}
+
\underbrace{(1-C(S))H(p_S)}_{\text{Coverage gap}}
+
\underbrace{\left[\log_2 V-H(p_S)\right]}_{\text{Non-uniformity}}.
\label{eq:overestimation_decomposition}
\end{equation}
The first term is OVMI, the second is excess due to evaluating in-vocabulary information without weighting by lexical coverage, and the third is error introduced by treating the supported words as uniformly distributed. At small $V$, in-vocabulary entropy and the uniform prior vastly overestimate information transfer, driven by incomplete lexical coverage, $C(S) \ll 1$, as even a perfectly accurate decoder cannot communicate an intended word that it does not support. 
As $V$ increases, coverage approaches unity and OVMI approaches in-vocabulary entropy, however, the gap to the uniform-prior, $\log_2(V)$, continues to grow due to the Zipfian distribution of words in the reference $p$. Invasive systems \citep{Moses2021NeuroprosthesisFD, Willett2023AHS} and recent non-invasive decoders~\citep{dAscoli2025TowardsDI, jayalath2026meg} use vocabularies of 50--250 Zipfian-distributed words, where both sources of overestimation are substantial.

\looseness=-1 \paragraph{Accuracy and WER overstate communication.} For a given vocabulary, Wolpaw's quantity is a monotonic transformation of in-vocabulary accuracy. Hence, accuracy inherits the issue of conditioning on this vocabulary. Under our model, the probability of decoding an intended
word is instead
\[
\Pr(\hat X = X)
=
\Pr(X \in S)\Pr(\hat X = X \mid X \in S)
=
C(S)P_{\mathrm{in\text{-}vocab}}.
\]
As a result, even perfect in-vocabulary accuracy can correspond to poor open-vocabulary communication when lexical coverage is low. Likewise, WER measures decoding errors on an evaluation transcript but does not by itself account for how much of an external reference distribution that system can represent. Consequently, high accuracy or low WER can occur at the same time as low open-vocabulary communication when the vocabulary does not cover much of what a user may wish to communicate.

\section{Results}
\label{sec:experiments}

\looseness=-1 We now use OVMI to evaluate existing speech BCI systems relative to an explicit
communication distribution. Unless otherwise stated, we use SUBTLEX-UK
\citep{vanHeuven2014SubtlexUKAN} as \(p\), treating its word frequencies
from film and television subtitles as a broad spoken English reference.
We first compare heterogeneous speech decoders with OVMI
(Section~\ref{sec:common-scale}), then examine how the comparison changes
with \(p\) (Section~\ref{sec:refdist}), and finally test OVMI as an objective
for vocabulary selection (Section~\ref{sec:optimise}). We provide full details on estimating OVMI in Appendix~\ref{app:estimation}.

\subsection{A Common Scale for Heterogeneous Speech BCIs}
\label{sec:common-scale}

\looseness=-1 We compare speech decoding systems that differ in vocabulary, dataset, task, and recording modality by evaluating each against the same reference distribution. For invasive work, the dataset and the decoder are typically reported together. We consider three landmark invasive studies \citep{Moses2021NeuroprosthesisFD, Willett2023AHS, Card2024AnAA} which decode attempted speech from partially paralysed patients. %
For non-invasive work, datasets are often released independently and subsequently evaluated with multiple decoding methods, so we use the strongest evaluated decoder for each dataset. For MEG-MASC \citep{gwilliams_introducing_2023}, we train MEG-XL~\citep{jayalath2026meg}, while for the other MEG datasets---LibriBrain100 \citep{mantegna2026} and Armeni \citep{armeni_10-hour_2022}---we train the decoder of \citet{dAscoli2025TowardsDI}. \citet{Tang2022SemanticRO} is included as a study in which the dataset and decoding method are reported together. In these datasets, participants listen to or view naturalistic stimuli such as movies or spoken narratives.

\begin{figure}[t]
    \centering
    \includegraphics[width=1.0\linewidth,trim=0 0.4cm 0 0.15cm,clip]{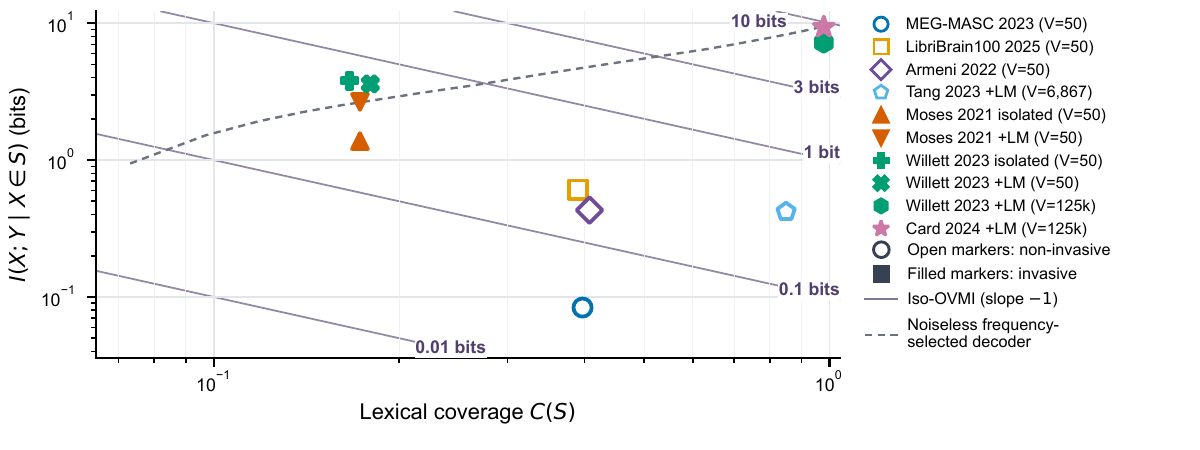}
    \caption{\textbf{Speech BCIs lose information through lexical coverage or decoding fidelity.} The product of the horizontal and vertical axes gives OVMI, with diagonal contours indicating equal OVMI. The dashed curve shows a noiseless decoder whose vocabulary consists of the top-$V$ most frequent words under the reference distribution. For frequency-selected vocabularies, vertical displacement from the noiseless curve reflects decoding error while movement leftwards reflects information foregone due to the limited coverage of smaller vocabularies.
}
    \label{fig:decomp}
\end{figure}

\looseness=-1 \paragraph{Where is information lost?} 
Figure~\ref{fig:decomp} separates the two factors determining OVMI. The
50-word invasive systems~\citep{Moses2021NeuroprosthesisFD,Willett2023AHS}
convey comparatively high information within their supported vocabularies but
cover only a small fraction of broad spoken English. Most evaluated
non-invasive systems achieve greater coverage through frequency-selected
vocabularies, but substantially lower in-vocabulary information. The later
125k-word invasive systems~\citep{Willett2023AHS,Card2024AnAA} occupy the
upper-right of the plot, combining near-complete coverage with high decoding
fidelity. For frequency-selected vocabularies, displacement below the noiseless frontier
reflects information lost through decoding error, while movement leftwards
reflects limited lexical coverage. Curated vocabularies do not need to lie below this
frontier because they may trade coverage for greater in-vocabulary entropy.
The large-vocabulary invasive systems improve primarily by expanding
coverage after decoding fidelity had already become strong, whereas the evaluated
non-invasive systems remain limited by decoding fidelity, likely as a consequence of the lower signal fidelity afforded by sensors outside the skull.

\begin{figure}[t]
    \centering
    \includegraphics[trim=0cm 0.75cm 0cm 0.5cm, clip,width=1.0\linewidth]{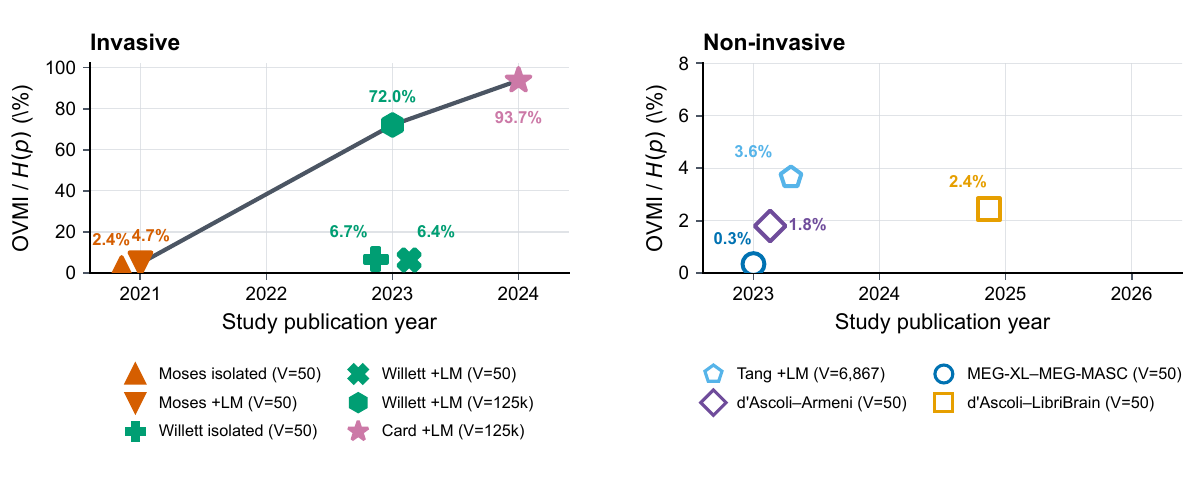}
    \caption{\textbf{Speech-BCI progress measured on a common open-vocabulary scale.}
Reported speech-decoding systems are plotted by publication year using OVMI normalised by the entropy of the reference distribution.
We include prior systems which state a vocabulary $S$ and an accuracy or WER sufficient to estimate OVMI.
The solid line connects the highest-scoring evaluated invasive system in each publication year.
Non-invasive results are unconnected because they correspond to different datasets rather than successive methods evaluated under a common experimental setting. Values provide a retrospective comparison on a common communication scale. An extended tabular version of these results with uncertainties is available in Appendix~\ref{app:table}.}
    \label{fig:progress}
\end{figure}

\begin{figure}[t]
    \centering
    \includegraphics[trim=0cm 0.25cm 0cm 0.75cm, clip,width=1.0\linewidth]{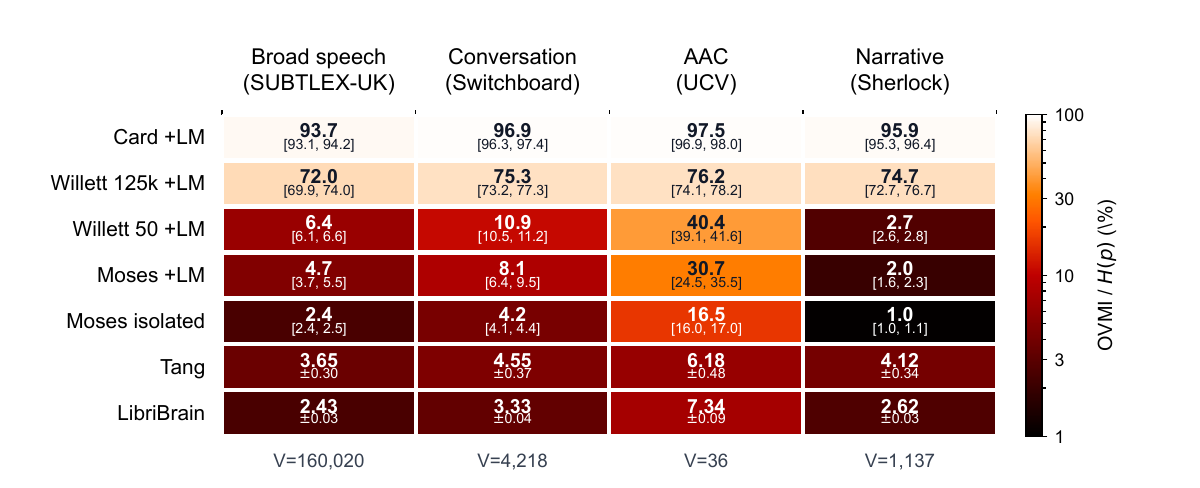}
    \caption{\textbf{Speech BCI performance depends on the intended communication domain.}
Each cell reports OVMI normalised by the entropy of the corresponding reference distribution, as a percentage, with uncertainty beneath. Brackets denote propagated intervals for published invasive results, whereas $\pm$ denotes one-SEM uncertainty for Tang and LibriBrain100. Armeni and MEG-MASC are excluded for brevity. Systems are evaluated against four communication references spanning broad spoken English \citep[SUBTLEX-UK;][]{vanHeuven2014SubtlexUKAN}, conversational speech \citep[Switchboard;][]{Godfrey1992SWITCHBOARDTS}, augmentative and alternative communication \citep[AAC/UCV;][]{erickson2019universal}, and narrative speech \citep[Sherlock Holmes;][]{doyle1892sherlock}. 
}
    \label{fig:heatmap}
\end{figure}

\looseness=-1 \paragraph{Understanding the progress of systems over time.} Figure~\ref{fig:progress} shows that in 2021, the 50-word Moses system conveys $2.4\%$ of the lexical information available in its isolated-word setting, increasing to $4.7\%$ with language-model post-processing. The 50-word Willett results in 2023 remain in the same regime, conveying $6.7\%$ and $6.4\%$ respectively. The major change comes from increasing the vocabulary with Willett's 125k-word system reaching $72.0\%$, followed by Card at $93.7\%$ in 2024. Thus, the largest historical increase among the invasive systems occurs with the transition to large-vocabulary decoding, after
in-vocabulary decoding fidelity had already approached saturation at smaller
vocabulary sizes.

\looseness=-1 Non-invasive word decoding remains in a lower information regime on this scale.
The \citet{dAscoli2025TowardsDI} decoder yields \(2.4\%\) on LibriBrain100 and \(1.8\%\) on
Armeni, MEG-XL yields \(0.3\%\) on MEG-MASC, and Tang's fMRI system reaches
\(3.6\%\). The particularly low MEG-MASC value occurs in a challenging regime
with little data per participant across a large and heterogeneous subject
population. For perceived speech, OVMI should be interpreted as
lexical decoding capability relative to the same speech distribution,
rather than communication. When comparing across settings, it is important to note the underlying paradigm; for example, perceived speech decoding does not by itself imply a useful communication interface.

\subsection{Information Transfer Depends on the Communication Distribution}
\label{sec:refdist}

\looseness=-1 OVMI is meaningful only relative to a specified communication target.
Figure~\ref{fig:heatmap} evaluates the same systems against four reference
distributions representing broad spoken English (SUBTLEX-UK), conversation
(Switchboard), augmentative and alternative communication (AAC; UCV), and
narrative speech (Sherlock). Because these distributions have different
entropies, we normalise OVMI by \(H(p)\).

\looseness=-1 If information transfer were a property of the decoder alone, then there would be no difference between the columns. Instead, we find that large-vocabulary systems are comparatively insensitive to the choice of
\(p\) as Card conveys \(93.7\)--\(97.5\%\) across the four references and the
125k-word Willett system \(72.0\)--\(76.2\%\). However, systems with smaller vocabularies
are much more sensitive. The 50-word Willett system, for example, conveys
\(6.4\%\) of broad spoken-English information but \(40.4\%\) under the AAC
reference and Moses similarly increases from \(4.7\%\) to \(30.7\%\). These systems use vocabularies designed around clinical communication, and consequently cover a much larger proportion of the more restricted AAC distribution than of unrestricted speech. LibriBrain100
and the isolated-word Moses system are nearly matched under broad speech
(\(2.43\%\) versus \(2.4\%\)), but Moses leads substantially under AAC
(\(16.5\%\) versus \(7.34\%\)), whereas LibriBrain100 leads under narrative
speech (\(2.62\%\) versus \(1.0\%\)). Whether one system represents progress
over another can therefore depend on what the interface is intended to
communicate. Accordingly, OVMI should always be reported together with \(p\) and when the
intended application is uncertain, several plausible reference distributions
can be reported as we do here.

\subsection{OVMI for Vocabulary Selection}
\label{sec:optimise}

\looseness=-1 Up to this point, we have used OVMI retrospectively to evaluate prior studies. We next ask whether it can also help choose the vocabulary of a system.
Recent contrastive decoders restrict a larger
candidate lexicon to a smaller vocabulary at inference to improve accuracy \citep{dAscoli2025TowardsDI,jayalath2026meg}, creating a trade-off between
supporting frequent or important words and retaining words that are easier to decode.
OVMI provides a natural objective because it accounts for both of these factors.

\begin{figure}[t]
    \centering
    \includegraphics[width=1.0\linewidth,trim=0 0.25cm 0 0.25cm,clip]{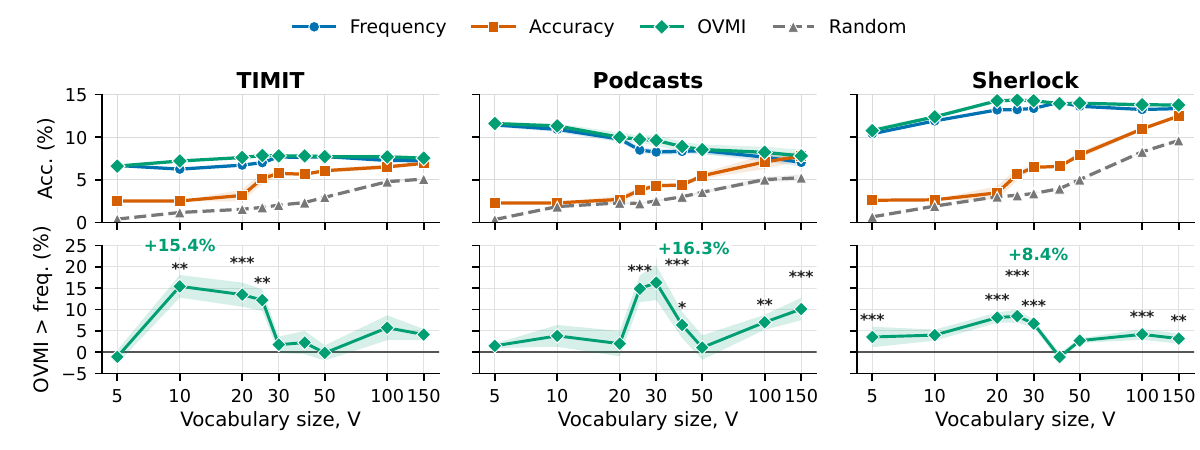}
    \caption{\looseness=-1 \textbf{OVMI-guided vocabulary selection improves accuracy.} With a decoder trained on LibriBrain100, we select vocabularies for three of its different test domains (TIMIT/Podcasts/Sherlock) according to (i) frequency in the domain training data, (ii) highest top-1 accuracy on the pooled validation data, (iii) maximising OVMI on the validation data with domain training data as $p$, and (iv) randomly. \textbf{(Top)} Held-out word accuracy, counting words outside the vocabulary as incorrect. Shading denotes the standard error across five seeds. At $V=250$, all methods converge because the candidate pool contains 250 words so we omit this trivial endpoint. \textbf{(Bottom)} Mean relative improvement of OVMI over frequency. Stars show statistical significance according to one-sided paired permutation tests ($^{*}p<.05$, $^{**}p<.01$, $^{***}p<.001$). More details in Appendix~\ref{app:optimise}.}
    \label{fig:optimise}
\end{figure}

\looseness=-1 We test OVMI as a selection objective using the \citet{dAscoli2025TowardsDI} contrastive decoder. The model is trained once with a candidate lexicon of 250 words. At inference, decoding requires retrieving the most likely word from a set of candidate embeddings. For each vocabulary size $V$, we select a subset $S_V$ and restrict retrieval to that subset, where varying $V$ does not require any changes to the trained decoder. Figure~\ref{fig:optimise} shows the results of selecting vocabularies of varying size according to frequency, validation accuracy, or OVMI. We repeat this for three different communication distributions: TIMIT, which has sentences designed to cover a diverse range of phonetic contexts, a set of podcast conversations, and a Sherlock Holmes book chapter. We measure the resulting test accuracy and find that OVMI matches or exceeds all alternatives across vocabulary sizes. It improves results over frequency selection, the strategy used in recent non-invasive decoders \citep{dAscoli2025TowardsDI, jayalath2025unlocking, jayalath2026meg}, at smaller vocabulary sizes, with peak relative improvements of 15.4\%, 16.3\%, and 8.4\%. The advantage diminishes at larger vocabulary sizes as expected when the choice of words becomes less restricted.

\section{Discussion}
\label{sec:discussion}

\looseness=-1 \paragraph{Scope and limitations.} Three caveats bound this work. 
(i) Our retrospective comparisons rely on the scalar estimator because confusion matrices are rarely reported. When richer statistics are available, OVMI can be computed directly from the empirical confusion matrix (Proposition~\ref{prop:ovmi-general}) or from per-word accuracies~(Appendix~\ref{app:ovmi-perword}).
(ii) OVMI penalises out-of-vocabulary words as unsupported, however, the meaning could be expressed through paraphrasing. We choose to keep the measure tied to the lexical information explicitly available through the decoder vocabulary.
(iii) We evaluate lexical rather than contextual information; extending OVMI to a conditional language distribution is left to future work.
{
\medskip
\begin{center}
\small
\renewcommand{\arraystretch}{1.15}
\begin{tabularx}{0.88\linewidth}{@{}X p{0.24\linewidth} p{0.12\linewidth}@{}}
\toprule
\textbf{Question} & \textbf{Evaluation type} & \textbf{Cost} \\
\midrule
Which method performs best in this setting?
    & Benchmark
    & Medium \\

What does that capability mean under $p$?
    & OVMI
    & Low \\

Is the system practically useful?
    & User-centred
    & High \\
\bottomrule
\end{tabularx}
\end{center}
\medskip

\looseness=-1
By evaluating capabilities measured in different experimental settings against the same reference distribution, OVMI places heterogeneous speech BCI systems on a common communication scale. It complements controlled benchmarks as benchmarks compare methods within a fixed setting, whereas OVMI relates capabilities across settings to the same communication objective. OVMI can also guide vocabulary design when the supported words are controllable, as our vocabulary selection experiments demonstrate. We note that the comparisons in this work do not erase differences in experimental setting, and a higher OVMI does not necessarily imply a more practical or clinically useful paradigm; for example, perceived speech decoding on its own is unlikely to constitute a usable communication interface. We recommend reporting OVMI alongside standard decoder metrics for future ease of comparison, and encourage user-centred evaluation before any practical deployment. We hope that OVMI's lasting role is to give the emerging field of speech BCIs a principled way to compare and improve different systems.

}

\clearpage
\begin{ack}
\looseness=-1 DJ would like to thank Charles London for his assistance with checking proofs and derivations, as well as Gilad Landau, Tasha Kim, Miran Özdogan, and Mélanie Schneider for reviewing early drafts of this work.

\looseness=-1 We would like to acknowledge the use of the University of Oxford Advanced Research Computing~(ARC) facility in carrying out this work. \url{http://dx.doi.org/10.5281/zenodo.22558}. We are especially grateful to the ARC support team for their timely support as conference deadlines approached.

\looseness=-1 We also thank \href{https://modal.com}{Modal Labs, Inc.} for a generous compute grant which helped support this project.

DJ is supported by an AWS Studentship from the EPSRC Centre for Doctoral Training in Autonomous Intelligent Machines and Systems (AIMS) (EP/S024050/1). OPJ and the PNPL group are supported by the MRC (MR/X00757X/1), Royal Society (RG$\backslash$R1$\backslash$241267), NSF (2314493), NFRF (NFRFT-2022-00241), SSHRC (895-2023-1022), and ARIA (SCNI-SE01-P004).
\end{ack}

\bibliographystyle{plainnat} %
\bibliography{references}

\clearpage
\appendix
\addcontentsline{toc}{part}{Appendix}
\etocsetnexttocdepth{subsection}
\etocsettocstyle{\section*{Appendix Contents}}{}
\localtableofcontents
\newpage

\section{Table of Variables}
\label{app:vars}

Table~\ref{tab:vars} summarises the core variables introduced in
Section~\ref{sec:background}~and~\ref{sec:ovmi-theory}.

\begin{table}[h]
\centering
\renewcommand{\arraystretch}{1.2}
\setlength{\tabcolsep}{6pt}
\begin{tabular}{@{}c p{0.58\linewidth} c@{}}
\toprule
\textbf{Symbol} & \textbf{Description} & \textbf{Introduced in} \\
\midrule
$\Omega$    & Reference lexicon.
            & Section \ref{sec:ovmi-theory} \\
$p$         & Reference distribution over $\Omega$.
            & Section \ref{sec:ovmi-theory} \\
$S$         & Decoder vocabulary, $S \subset \Omega$.
            & Section \ref{sec:ovmi-theory} \\
$V$         & Decoder vocabulary size, $V = |S|$.
            & Section \ref{sec:ovmi-theory} \\
$X$         & User's intended word drawn from the reference distribution, $X \sim p$.
            & Section \ref{sec:ovmi-theory} \\
$Y$         & Decoder output, $Y \in S \cup \{\varnothing\}$; $Y = \varnothing$ on abstention.
            & Section \ref{sec:ovmi-theory} \\
$Z$         & In-vocabulary indicator,
              $Z = \mathbf{1}[X \in S] = \mathbf{1}[Y \neq \varnothing]$.
            & Section \ref{sec:ovmi-theory} \\
$C(S)$      & Lexical coverage of $S$ under $p$,
              $C(S) = \Pr(X \in S) = \sum_{w \in S} p(w)$.
            & Section \ref{sec:ovmi-theory} \\
$p_S$       & In-vocabulary target distribution,
              $p_S(x) = p(x)/C(S)$ for $x \in S$.
            & Proposition~\ref{prop:ovmi-general} \\
$K_S$       & In-vocabulary channel,
              $K_S(y \mid x) = \Pr(Y = y \mid X = x)$ for $x, y \in S$.
            & Proposition~\ref{prop:ovmi-general} \\
$q_S$       & In-vocabulary output distribution,
              $q_S(y) = \sum_{x \in S} p_S(x)\, K_S(y \mid x)$.
            & Proposition~\ref{prop:ovmi-general} \\
$P$         & Per-symbol correct-decoding probability under the homogeneous
              symmetric channel.
            & Corollary~\ref{cor:ovmi-homogeneous} \\
$h_V(u)$    & Entropy of a $V$-ary symmetric channel row with correct
              probability $u$.
            & Equation~\eqref{eq:hV} \\
$I_{\mathrm{OVMI}}(S)$
            & Open-vocabulary mutual information,
              $I_{\mathrm{OVMI}}(S) = C(S)\, I(X; Y \mid X \in S)$.
            & Def.~\ref{def:ovmi},\, Eq.~\eqref{eq:ovmi-def} \\
\bottomrule
\end{tabular}
\caption{Core variables in the open-vocabulary mutual information
formulation. Section, proposition, corollary, and equation references
point to where each symbol is first introduced.}
\label{tab:vars}
\end{table}

\section{Derivation of Wolpaw's Information Transfer Rate}
\label{app:wolpaw}

Under Wolpaw's standard derivation, two simplifying assumptions are made. First, all intended symbols are equally likely, so \(p(x)=1/V\). Second, the decoder is assumed to have symmetric errors: it is correct with probability \(P\), and when it is wrong, each of the other \(V-1\) symbols is equally likely. Thus
\[
\Pr(Y=y\mid X=x)=
\begin{cases}
P, & y=x,\\[2mm]
\dfrac{1-P}{V-1}, & y\neq x.
\end{cases}
\]

These assumptions make the calculation straightforward. Fix any output symbol \(y\). There is exactly one intended symbol, namely \(x=y\), that produces \(y\) as the correct output, contributing probability mass \(P\). The remaining \(V-1\) intended symbols can also produce the same \(y\), but only as an error, and each contributes \((1-P)/(V-1)\). Averaging over the uniform prior therefore gives
\[
p(y)
=
\sum_{x\in S} p(y\mid x)p(x)
=
\frac{1}{V}\left[P + (V-1)\frac{1-P}{V-1}\right]
=
\frac{1}{V}.
\]
So the decoder output \(Y\) is itself uniform over the vocabulary, and hence
\[
H(Y)=\log_2 V.
\]

Next consider the conditional entropy. Once we fix an intended symbol \(x\), the conditional distribution of \(Y\) has one outcome with probability \(P\) (the correct symbol) and \(V-1\) outcomes with probability \((1-P)/(V-1)\) (the incorrect symbols). Therefore
\[
H(Y\mid X=x)
=
-P\log_2 P
-(V-1)\frac{1-P}{V-1}\log_2\!\left(\frac{1-P}{V-1}\right)
=
-P\log_2 P
-(1-P)\log_2\!\left(\frac{1-P}{V-1}\right).
\]
Because this expression does not depend on \(x\), averaging over \(x\) leaves it unchanged:
\[
H(Y\mid X)
=
-P\log_2 P
-(1-P)\log_2\!\left(\frac{1-P}{V-1}\right).
\]

Substituting these two terms into \(I(X;Y)=H(Y)-H(Y\mid X)\) yields the Wolpaw expression for mutual information in an in-vocabulary decoder:
\begin{equation}
I_{\mathrm{Wolpaw}}(X;Y)
=
\log_2 V
+
P\log_2 P
+
(1-P)\log_2\!\left(\frac{1-P}{V-1}\right).
\end{equation}

\section{Proofs and Supplementary Derivations for OVMI}
\label{app:ovmi-proofs}

We use the convention \(0\log_2 0 = 0\).

\subsection{Proof of Proposition~\ref{prop:ovmi-decomp}}

\begin{proof}[Proof of Proposition~\ref{prop:ovmi-decomp}]
Recall that
\[
Z=\mathbf 1[X\in S]=\mathbf 1[Y\neq \varnothing].
\]
Since \(Z\) is a deterministic function of \(Y\), adjoining \(Z\) to the
output does not change the mutual information:
\[
I(X;Y)=I(X;Y,Z).
\]
Applying the chain rule for mutual information gives
\[
I(X;Y,Z)=I(X;Z)+I(X;Y\mid Z).
\]
Because \(Z\) is also a deterministic function of \(X\),
\[
I(X;Z)=H(Z)-H(Z\mid X)=H(Z).
\]
Now \(\Pr(Z=1)=C(S)\) and \(\Pr(Z=0)=1-C(S)\), so
\[
H(Z)=H_2(C(S)).
\]
It remains to expand the conditional mutual information:
\[
I(X;Y\mid Z)
=
\sum_{z\in\{0,1\}} \Pr(Z=z)\,I(X;Y\mid Z=z).
\]
Under our model, \(Z=0\) implies \(Y=\varnothing\)
deterministically, hence
\[
I(X;Y\mid Z=0)=0.
\]
Also, \(Z=1\) is exactly the event \(X\in S\), so
\[
I(X;Y\mid Z=1)=I(X;Y\mid X\in S).
\]
Therefore
\[
I(X;Y\mid Z)=C(S)\,I(X;Y\mid X\in S).
\]
Combining the pieces yields
\[
I(X;Y)=H_2(C(S))+I(X;Y\mid Z)
      =H_2(C(S))+C(S)\,I(X;Y\mid X\in S),
\]
as claimed.
\end{proof}

\subsection{Equivalent Entropy Derivation of Proposition~\ref{prop:ovmi-decomp}}

Proposition~\ref{prop:ovmi-decomp} can also be obtained by decomposing \(H(Y)\) and \(H(Y\mid X)\).

Because \(Z=\mathbf 1[Y\neq \varnothing]\) is a deterministic function of
\(Y\), the chain rule gives
\[
H(Y)=H(Z)+H(Y\mid Z).
\]
Expanding by the two values of \(Z\),
\begin{align}
H(Y)
&= H_2(C(S))
 + \Pr(Z=1)\,H(Y\mid Z=1)
 + \Pr(Z=0)\,H(Y\mid Z=0) \nonumber\\
&= H_2(C(S))
 + C(S)\,H(Y\mid Z=1),
\label{eq:appendix-HY}
\end{align}
since \(Z=0\) implies \(Y=\varnothing\) deterministically and hence
\(H(Y\mid Z=0)=0\). Because \(Z=1\) is equivalent to \(X\in S\),
\[
H(Y\mid Z=1)=H(Y\mid X\in S).
\]
Thus
\begin{equation}
H(Y)=H_2(C(S))+C(S)\,H(Y\mid X\in S).
\label{eq:appendix-HY-final}
\end{equation}

Similarly,
\begin{align}
H(Y\mid X)
&= \Pr(X\in S)\,H(Y\mid X, X\in S)
 + \Pr(X\notin S)\,H(Y\mid X, X\notin S) \nonumber\\
&= C(S)\,H(Y\mid X, X\in S),
\label{eq:appendix-HYgivenX}
\end{align}
because \(X\notin S\) implies \(Y=\varnothing\) deterministically.

Subtracting \eqref{eq:appendix-HYgivenX} from \eqref{eq:appendix-HY-final},
\begin{align}
I(X;Y)
&= H(Y)-H(Y\mid X) \nonumber\\
&= H_2(C(S))
 + C(S)\Bigl[H(Y\mid X\in S)-H(Y\mid X, X\in S)\Bigr] \nonumber\\
&= H_2(C(S))
 + C(S)\,I(X;Y\mid X\in S),
\end{align}
recovering \eqref{eq:open-mi-decomp}.

\subsection{Proposition~\ref{prop:ovmi-general}}
\label{app:prop2}

\begin{proposition}[OVMI for a general in-vocabulary decoder]
\label{prop:ovmi-general}
Now condition on the event that the intended word is representable, i.e. \(X\in S\). The corresponding distribution over in-vocabulary target words is
$
p_S(x)=\Pr(X=x\mid X\in S)=\frac{p(x)}{C(S)},
$
where $x \in S$, that is, the unrestricted word distribution restricted to \(S\) and renormalised to sum to one.

For \(x,y\in S\), let
$
K_S(y\mid x)=\Pr(Y=y\mid X=x)
$
denote the decoder's conditional distribution over outputs when the intended word is \(x\). Thus \(K_S\) is the effective channel operating within the deployed vocabulary.

Under \(p_S\) and \(K_S\), the resulting output distribution on \(S\) is
\[
q_S(y)=\sum_{x\in S} p_S(x)\,K_S(y\mid x), \qquad y\in S.
\]
Then
\begin{equation}
I_{\mathrm{OVMI}}(S)
=
C(S)\left[
H(q_S)-\sum_{x\in S} p_S(x)\,H\!\left(K_S(\cdot\mid x)\right)
\right]
\label{eq:ovmi-general}
\end{equation}
where entropy $H(X) = -\sum_i p(x_i) \log_2 p(x_i)$ for a discrete random variable $X$.
\end{proposition}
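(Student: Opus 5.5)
The plan is to start from Definition~\ref{def:ovmi}, which already gives \(I_{\mathrm{OVMI}}(S)=C(S)\,I(X;Y\mid X\in S)\). The remaining work is to evaluate the conditional mutual information \(I(X;Y\mid X\in S)\) in terms of \(p_S\), \(K_S\) and \(q_S\).

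First, identify the joint law of \((X,Y)\) conditional on the event \(\{X\in S\}\). By Bayes' rule, \(\Pr(X=x\mid X\in S)=p(x)/C(S)=p_S(x)\) for \(x\in S\), and zero otherwise. On this event the model guarantees \(Y\in S\). Because \(\{X=x\}\subseteq\{X\in S\}\), conditioning additionally on \(X\in S\) does not change the decoder's behaviour:
\[
\Pr(Y=y\mid X=x, X\in S)=K_S(y\mid x).
\]
Hence the conditional joint law is \(p_S(x)K_S(y\mid x)\) on \(S\times S\), and marginalising over \(x\) gives exactly \(q_S(y)\) as the conditional output law.

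Second, apply \(I(X;Y\mid X\in S)=H(Y\mid X\in S)-H(Y\mid X, X\in S)\), the same split used in the entropy derivation of Proposition~\ref{prop:ovmi-decomp}. The first term equals \(H(q_S)\) by the previous step. The second term is the average of row entropies,
\[
\sum_{x\in S}p_S(x)\,H\bigl(K_S(\cdot\mid x)\bigr).
\]
Substituting both terms and multiplying by \(C(S)\) yields Eq.~\eqref{eq:ovmi-general}.

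The only delicate point is the conditioning step: one must confirm that the event \(X\in S\) leaves the channel rows unchanged. This holds because it is implied by each \(X=x\) with \(x\in S\). Terms with \(p_S(x)=0\) or zero channel entries are handled by the convention \(0\log_2 0=0\). The case \(C(S)=0\) is degenerate: both sides vanish, since the conditional quantity is then taken as \(0\).
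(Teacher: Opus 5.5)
Your proposal is correct and follows the same route as the paper's proof: start from Definition~\ref{def:ovmi}, identify $p_S$, $K_S$ and $q_S$ as the conditional input, channel and output laws given $X\in S$, apply $I=H(Y)-H(Y\mid X)$ under that conditional distribution, and multiply by $C(S)$. Your explicit check that conditioning on $X\in S$ leaves the channel rows unchanged, and your remark on the degenerate case $C(S)=0$, are small additions the paper leaves implicit.
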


We will use the following shorthand
\begin{equation}
h_V(u)
:=
-u\log_2 u
-(1-u)\log_2\!\left(\frac{1-u}{V-1}\right).
\label{eq:hV}
\end{equation}
This is the same as the term $H(Y|X)$ in Wolpaw's formula in Eq.~\ref{eq:wolpaw-mi}, describing the entropy of the decoder's output distribution under a symmetric error assumption where the intended word is output with probability \(u\), and each of the other \(V-1\) words is output with probability \((1-u)/(V-1)\).

\subsection{Proof of Proposition~\ref{prop:ovmi-general}}

\begin{proof}[Proof of Proposition~\ref{prop:ovmi-general}]
By Definition~\ref{def:ovmi},
\[
I_{\mathrm{OVMI}}(S)=C(S)\,I(X;Y\mid X\in S).
\]
Conditioned on \(X\in S\), the input distribution is
\[
p_S(x)=\Pr(X=x\mid X\in S)=\frac{p(x)}{C(S)},
\]
and the in-vocabulary channel is
\[
K_S(y\mid x)=\Pr(Y=y\mid X=x), \qquad x,y\in S.
\]
The output distribution is therefore
\[
q_S(y)=\Pr(Y=y\mid X\in S)=\sum_{x\in S} p_S(x)\,K_S(y\mid x).
\]
Applying the standard mutual-information identity $I(X;Y) = H(Y) - H(Y\mid X)$ under the conditional
distribution \(X\mid X\in S\),
\[
I(X;Y\mid X\in S)
=
H(q_S)-\sum_{x\in S} p_S(x)\,H\!\left(K_S(\cdot\mid x)\right).
\]
Multiplying by \(C(S)\) yields
\[
I_{\mathrm{OVMI}}(S)
=
C(S)\left[
H(q_S)-\sum_{x\in S} p_S(x)\,H\!\left(K_S(\cdot\mid x)\right)
\right],
\]
which is \eqref{eq:ovmi-general}.
\end{proof}

\subsection{Proof of Corollary~\ref{cor:ovmi-homogeneous}}

\begin{proof}[Proof of Corollary~\ref{cor:ovmi-homogeneous}]
Under the homogeneous symmetric channel,
\[
K_S(y\mid x)=
\begin{cases}
P, & y=x,\\[2mm]
\dfrac{1-P}{V-1}, & y\neq x.
\end{cases}
\]
For each \(x\in S\), the entropy of the corresponding channel row is
\[
H\!\left(K_S(\cdot\mid x)\right)
=
-P\log_2 P
-(1-P)\log_2\!\left(\frac{1-P}{V-1}\right)
=
h_V(P),
\]
where \(h_V\) is defined in \eqref{eq:hV}.

The induced output distribution satisfies
\begin{align}
q_S(j)
&= \sum_{x\in S} p_S(x)\,K_S(j\mid x) \nonumber\\
&= p_S(j)P + \sum_{x\neq j} p_S(x)\frac{1-P}{V-1} \nonumber\\
&= P\,p_S(j)+\frac{1-P}{V-1}\bigl(1-p_S(j)\bigr),
\end{align}
which is \eqref{eq:qS-homogeneous}.

Substituting into Proposition~\ref{prop:ovmi-general} gives
\begin{align}
I_{\mathrm{OVMI}}(S)
&=
C(S)\left[
H(q_S)-\sum_{x\in S} p_S(x)\,h_V(P)
\right] \nonumber\\
&=
C(S)\Bigl[H(q_S)-h_V(P)\Bigr],
\end{align}
because \(\sum_{x\in S} p_S(x)=1\). This is
\eqref{eq:ovmi-homogeneous}.
\end{proof}

\subsection{Retaining Per-Word Accuracies in OVMI}
\label{app:ovmi-perword}

\begin{corollary}[Symmetric error model with word-specific accuracies]
\label{cor:ovmi-word-dependent}
Suppose each intended word \(x\in S\) has its own correct-decoding probability \(P_c(x)\). Thus some words may be easier to decode than others. Conditional on an error, however, the decoder distributes the remaining probability uniformly across the other \(V-1\) in-vocabulary words. Hence
\[
K_S(y\mid x)=
\begin{cases}
P_c(x), & y=x,\\[2mm]
\dfrac{1-P_c(x)}{V-1}, & y\neq x.
\end{cases}
\]

Under this model, the output distribution on \(S\) is
\begin{equation}
q_S(j)
=
p_S(j)P_c(j)
+
\frac{1}{V-1}\sum_{x\neq j} p_S(x)\bigl(1-P_c(x)\bigr),
\qquad j\in S.
\label{eq:qS-word-dependent}
\end{equation}
The first term is the probability mass from correctly decoding \(j\) when \(X=j\). The second term is the total probability mass assigned to \(j\) when some other in-vocabulary word was intended but the decoder makes an error and spreads that error uniformly across the incorrect outputs. Consequently,
\begin{equation}
I_{\mathrm{OVMI}}(S)
=
C(S)\left[
H(q_S)-\sum_{x\in S} p_S(x)\,h_V\!\bigl(P_c(x)\bigr)
\right].
\label{eq:ovmi-word-dependent}
\end{equation}
\end{corollary}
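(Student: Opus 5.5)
This is a direct specialisation of Proposition~\ref{prop:ovmi-general}, which already gives
\[
I_{\mathrm{OVMI}}(S)=C(S)\Bigl[H(q_S)-\sum_{x\in S}p_S(x)\,H\bigl(K_S(\cdot\mid x)\bigr)\Bigr]
\]
for an arbitrary in-vocabulary channel $K_S$. The plan is to do the same two computations as in the proof of Corollary~\ref{cor:ovmi-homogeneous}, letting the correct-decoding probability depend on the intended word.

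\textbf{Step 1: channel-row entropies.} Fix $x\in S$. The row $K_S(\cdot\mid x)$ has one entry equal to $P_c(x)$ and $V-1$ entries equal to $(1-P_c(x))/(V-1)$. Its entropy is therefore
\[
-P_c(x)\log_2 P_c(x)-(1-P_c(x))\log_2\!\left(\frac{1-P_c(x)}{V-1}\right),
\]
which is $h_V(P_c(x))$ by \eqref{eq:hV}. The convention $0\log_2 0=0$ covers the cases $P_c(x)\in\{0,1\}$. Unlike the homogeneous case, this term now depends on $x$, so the average $\sum_x p_S(x)h_V(P_c(x))$ does not collapse to a single value.

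\textbf{Step 2: output distribution.} Fix $j\in S$ and split $q_S(j)=\sum_{x\in S}p_S(x)K_S(j\mid x)$ into two parts. The term $x=j$ contributes $p_S(j)P_c(j)$. Each term $x\neq j$ contributes $p_S(x)(1-P_c(x))/(V-1)$. Summing these gives \eqref{eq:qS-word-dependent}.

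\textbf{Step 3: substitute.} Placing both expressions into \eqref{eq:ovmi-general} yields \eqref{eq:ovmi-word-dependent}.

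There is no real obstacle here. The only point needing care is Step 2. In the homogeneous case the error factor $(1-P)$ could be pulled out of the sum, giving $\frac{1-P}{V-1}(1-p_S(j))$. Here it cannot, because the error mass depends on $x$, so the sum over $x\neq j$ must be kept explicit.

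As a sanity check, setting $P_c\equiv P$ should recover \eqref{eq:qS-homogeneous} and \eqref{eq:ovmi-homogeneous}. It would also be worth confirming that $q_S$ sums to one, since $\sum_j q_S(j)=\sum_x p_S(x)\bigl[P_c(x)+(1-P_c(x))\bigr]=1$.
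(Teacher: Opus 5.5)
Your proposal is correct and follows the paper's proof essentially step for step: you identify each channel row's entropy as $h_V(P_c(x))$, split $q_S(j)$ into the $x=j$ term and the $x\neq j$ error terms, and substitute both into Proposition~\ref{prop:ovmi-general}. Your additional checks, recovering the homogeneous case when $P_c\equiv P$ and verifying $\sum_j q_S(j)=1$, do not appear in the paper's proof, but both are correct.
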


This model relaxes the shared-accuracy assumption of Corollary~\ref{cor:ovmi-homogeneous}, while retaining the simplifying assumption that, conditional on an error, all incorrect outputs are equally likely.

\begin{proof}[Proof of Corollary~\ref{cor:ovmi-word-dependent}]
Under the word-dependent symmetric channel,
\[
K_S(y\mid x)=
\begin{cases}
P_c(x), & y=x,\\[2mm]
\dfrac{1-P_c(x)}{V-1}, & y\neq x.
\end{cases}
\]
For each \(x\in S\), the entropy of the \(x\)-th channel row is
\[
H\!\left(K_S(\cdot\mid x)\right)
=
-P_c(x)\log_2 P_c(x)
-(1-P_c(x))\log_2\!\left(\frac{1-P_c(x)}{V-1}\right)
=
h_V\!\bigl(P_c(x)\bigr).
\]

The induced output distribution is
\begin{align}
q_S(j)
&= \sum_{x\in S} p_S(x)\,K_S(j\mid x) \nonumber\\
&= p_S(j)P_c(j)+\sum_{x\neq j} p_S(x)\frac{1-P_c(x)}{V-1} \nonumber\\
&= p_S(j)P_c(j)+\frac{1}{V-1}\sum_{x\neq j} p_S(x)\bigl(1-P_c(x)\bigr),
\end{align}
which is \eqref{eq:qS-word-dependent}.

Substituting into Proposition~\ref{prop:ovmi-general} yields
\[
I_{\mathrm{OVMI}}(S)
=
C(S)\left[
H(q_S)-\sum_{x\in S} p_S(x)\,h_V\!\bigl(P_c(x)\bigr)
\right],
\]
which is \eqref{eq:ovmi-word-dependent}.
\end{proof}

\subsection{Further Observations for the Homogeneous Symmetric Channel}
\label{app:homogeneous-channel-observations}

A useful rearrangement of \eqref{eq:qS-homogeneous} is
\begin{equation}
q_S(j)
=
\frac{1-P}{V-1}
+
\frac{PV-1}{V-1}\,p_S(j).
\label{eq:qS-affine}
\end{equation}
Equation~\eqref{eq:qS-affine} makes three facts immediate.

First, if \(p_S\) is uniform, then \(q_S\) is uniform.

Second, if \(P=1/V\) (chance performance), then \(q_S\) is uniform for
any \(p_S\).

Third, if \(P\neq 1/V\), then \(q_S\) is non-uniform if and only if
\(p_S\) is non-uniform. In particular, for informative decoders with
\(P>1/V\), any non-uniformity in the in-vocabulary language distribution
propagates to the output distribution. Consequently,
\[
H(q_S)<\log_2 V
\]
whenever \(p_S\) is non-uniform and \(P\neq 1/V\), since the uniform
distribution uniquely maximises entropy on a finite alphabet.

\section{Closed Form OVMI Pseudocode}
\label{app:ovmi-pseudocode}

In Figure~\ref{alg:ovmi}, we provide Python pseudocode for computing OVMI under the homogeneous (single accuracy value) and symmetric error assumption. Note that when computing OVMI, we typically exclude words with fewer than five instances to ensure accuracy estimates for $P_\mathrm{macro}$ are reliable.

\begin{figure}[t]             
  \begin{lstlisting}[language=Python,basicstyle=\ttfamily\small]
import numpy as np
from scipy.special import xlogy

_LN2 = np.log(2.0)

def _xlog2y(x, y):
    """x * log2(y), with the convention 0 * log2(0) = 0."""
    return xlogy(x, y) / _LN2

def compute_ovmi(subset, P_c, freqs, p_freq):
    """
    subset: indices of words in the candidate vocabulary
    P_c:    macro-averaged classification accuracy over the subset
    freqs:  reference unigram frequencies (aligned with subset)
    p_freq: total number of words in the reference distribution
    Returns the open-vocabulary mutual information (OVMI).
    """
    V = len(subset)
    if V <= 1:
        return 0.0

    # (1) Coverage: natural-language mass spanned by the subset
    coverage = freqs.sum() / p_freq

    # (2) Input distribution p_s(x) over subset words
    if freqs.sum() != 0:
        p_x = freqs / freqs.sum()
    else:
        return 0

    # (3) Output distribution q(y) assuming errors spread uniformly
    #     over the V-1 incorrect classes
    p_err = (1 - P_c) / (V - 1)
    q_y = p_err + p_x * (P_c - p_err)

    # (4) Marginal entropy of predictions
    H_Y = -np.sum(_xlog2y(q_y, q_y))

    # (5) Conditional entropy
    H_YX = -_xlog2y(P_c, P_c) - _xlog2y(1 - P_c, p_err)

    # (6) Mutual information, floored at 0 to absorb numerical noise
    MI = max(0.0, H_Y - H_YX)

    # (7) Weight by natural-language coverage
    return coverage * MI
  \end{lstlisting}

  \caption{OVMI computation for a candidate vocabulary
  subset with a single accuracy scalar.}
  \label{alg:ovmi}
  \end{figure}

\section{Table of Results}
\label{app:table}

Table~\ref{tab:main} provides the full OVMI scores and uncertainties for the systems discussed in the main text across all the reference distributions explored in this work.

\begin{table*}[t]
\caption{\textbf{Cross-study comparison.} Each cell reports OVMI in bits, with OVMI/$H(p)$ in parentheses and uncertainty beneath. Rows are ordered within each block by SUBTLEX--UK OVMI. Superscripts on $P$ identify balanced top-1 accuracy ($\mathrm{b}$), isolated-word accuracy ($\mathrm{a}$), and $1-\mathrm{WER}$ ($\mathrm{w}$); $\dagger$ indicates that $P = 1 - \mathrm{WER}$ is a lower bound. Bracketed ranges are 95\% sampling intervals for invasive systems using Wilson score intervals for isolated-word accuracies and published sentence/trial intervals for WER-derived rows. $\pm$ values are OVMI uncertainties obtained by mapping mean $P$ plus or minus one SEM across three training seeds for non-invasive systems.}
\label{tab:main}
\centering
\scriptsize
\setlength{\tabcolsep}{1.1pt}
\renewcommand{\arraystretch}{1.12}
\begin{tabular*}{\textwidth}{@{\extracolsep{\fill}}lrr*{4}{r}@{}}
\toprule
System & $V$ & $P$ & \shortstack{Broad spoken\\SUBTLEX--UK} & \shortstack{Conversational\\Switchboard} & \shortstack{AAC / clinical\\UCV} & \shortstack{Narrative prose\\Sherlock} \\
 & & & \multicolumn{1}{c}{\scriptsize $H(p)=9.77$} & \multicolumn{1}{c}{\scriptsize $H(p)=8.27$} & \multicolumn{1}{c}{\scriptsize $H(p)=4.30$} & \multicolumn{1}{c}{\scriptsize $H(p)=8.44$} \\
\midrule
\multicolumn{7}{l}{\textbf{Attempted speech (invasive)}} \\
Card (+LM) & 125k & 97.5\%$^{\mathrm{w}\dagger}$ & \shortstack[r]{$9.156$ {\tiny (93.7\%)}\\$[9.093,9.208]$} & \shortstack[r]{$8.015$ {\tiny (96.9\%)}\\$[7.964,8.057]$} & \shortstack[r]{$4.197$ {\tiny (97.5\%)}\\$[4.171,4.218]$} & \shortstack[r]{$8.096$ {\tiny (95.9\%)}\\$[8.046,8.138]$} \\
\addlinespace[1.2pt]
Willett (+LM) & 125k & 76.2\%$^{\mathrm{w}\dagger}$ & \shortstack[r]{$7.038$ {\tiny (72.0\%)}\\$[6.835,7.232]$} & \shortstack[r]{$6.227$ {\tiny (75.3\%)}\\$[6.052,6.394]$} & \shortstack[r]{$3.279$ {\tiny (76.2\%)}\\$[3.189,3.365]$} & \shortstack[r]{$6.311$ {\tiny (74.7\%)}\\$[6.135,6.479]$} \\
\addlinespace[1.2pt]
Willett (isolated) & 50 & 94.0\%$^{\mathrm{a}}$ & \shortstack[r]{$0.656$ {\tiny (6.7\%)}\\$[0.637,0.672]$} & \shortstack[r]{$0.949$ {\tiny (11.5\%)}\\$[0.922,0.971]$} & \shortstack[r]{$1.822$ {\tiny (42.3\%)}\\$[1.776,1.859]$} & \shortstack[r]{$0.239$ {\tiny (2.8\%)}\\$[0.232,0.244]$} \\
\addlinespace[1.2pt]
Willett (+LM) & 50 & 90.9\%$^{\mathrm{w}\dagger}$ & \shortstack[r]{$0.621$ {\tiny (6.4\%)}\\$[0.599,0.642]$} & \shortstack[r]{$0.900$ {\tiny (10.9\%)}\\$[0.868,0.929]$} & \shortstack[r]{$1.737$ {\tiny (40.4\%)}\\$[1.681,1.789]$} & \shortstack[r]{$0.227$ {\tiny (2.7\%)}\\$[0.219,0.234]$} \\
\addlinespace[1.2pt]
Moses (+LM) & 50 & 74.4\%$^{\mathrm{w}\dagger}$ & \shortstack[r]{$0.459$ {\tiny (4.7\%)}\\$[0.360,0.539]$} & \shortstack[r]{$0.670$ {\tiny (8.1\%)}\\$[0.527,0.783]$} & \shortstack[r]{$1.321$ {\tiny (30.7\%)}\\$[1.053,1.529]$} & \shortstack[r]{$0.169$ {\tiny (2.0\%)}\\$[0.133,0.197]$} \\
\addlinespace[1.2pt]
Moses (isolated) & 50 & 47.1\%$^{\mathrm{a}}$ & \shortstack[r]{$0.238$ {\tiny (2.4\%)}\\$[0.230,0.245]$} & \shortstack[r]{$0.350$ {\tiny (4.2\%)}\\$[0.339,0.361]$} & \shortstack[r]{$0.711$ {\tiny (16.5\%)}\\$[0.690,0.733]$} & \shortstack[r]{$0.088$ {\tiny (1.0\%)}\\$[0.085,0.091]$} \\
\midrule
\multicolumn{7}{l}{\textbf{Perceived speech (non-invasive)}} \\
Tang 2023 & 6867 & 6.7\%$^{\mathrm{w}\dagger}$ & \shortstack[r]{$0.357$ {\tiny (3.6\%)}\\$\pm0.030$} & \shortstack[r]{$0.376$ {\tiny (4.5\%)}\\$\pm 0.031$} & \shortstack[r]{$0.266$ {\tiny (6.2\%)}\\$\pm 0.020$} & {\shortstack[r]{$0.348$ {\tiny (4.1\%)}\\$\pm 0.029$}} \\
\addlinespace[1.2pt]
LibriBrain100 2025 & 50 & 25.8\%$^{\mathrm{b}}$ & \shortstack[r]{$0.237$ {\tiny (2.4\%)}\\$\pm 0.003$} & \shortstack[r]{$0.275$ {\tiny (3.3\%)}\\$\pm 0.003$} & \shortstack[r]{$0.316$ {\tiny (7.3\%)}\\$\pm 0.004$} & \shortstack[r]{$0.221$ {\tiny (2.6\%)}\\$\pm 0.003$} \\
\addlinespace[1.2pt]
Armeni 2022 & 50 & 20.8\%$^{\mathrm{b}}$ & \shortstack[r]{$0.176$ {\tiny (1.8\%)}\\$\pm 0.004$} & \shortstack[r]{$0.192$ {\tiny (2.3\%)}\\$\pm 0.004$} & \shortstack[r]{$0.254$ {\tiny (5.9\%)}\\$\pm 0.005$} & \shortstack[r]{$0.189$ {\tiny (2.2\%)}\\$\pm 0.004$} \\
\addlinespace[1.2pt]
MEG-MASC 2023 & 50 & 8.5\%$^{\mathrm{b}}$ & \shortstack[r]{$0.033$ {\tiny (0.3\%)}\\$\pm 0.003$} & \shortstack[r]{$0.038$ {\tiny (0.5\%)}\\$\pm 0.004$} & \shortstack[r]{$0.059$ {\tiny (1.4\%)}\\$\pm 0.006$} & \shortstack[r]{$0.035$ {\tiny (0.4\%)}\\$\pm 0.003$} \\
\bottomrule
\end{tabular*}
\end{table*}

\section{Cross-Study OVMI Estimation Details}
\label{app:estimation}

This appendix describes how we estimate OVMI from the summary statistics
reported by existing speech-decoding studies and how we construct the
reference distributions used in Section~\ref{sec:experiments}. The general
definition of OVMI in Proposition~\ref{prop:ovmi-general} requires the
decoder channel $K_S(y\mid x)$. In retrospective comparisons, however,
published studies rarely report complete confusion matrices and instead
typically provide a vocabulary size together with accuracy or word error
rate (WER). We therefore use the homogeneous symmetric-channel estimator
from Eq.~\eqref{eq:ovmi-homogeneous} whenever a more detailed channel
estimate is unavailable.

\subsection{Estimation from a Scalar Accuracy}
\label{app:scalar-estimation}

For a decoder vocabulary $S$ of size $V$, and given a single correct-decoding probability $P$, we approximate the
in-vocabulary channel as
\[
K_S(y\mid x)=
\begin{cases}
P, & y=x,\\[2mm]
\dfrac{1-P}{V-1}, & y\neq x.
\end{cases}
\]
The induced output distribution is therefore
\[
q_S(y)
=
Pp_S(y)
+
\frac{1-P}{V-1}\left[1-p_S(y)\right],
\]
and the conditional entropy of the symmetric channel is
\[
h_V(P)
=
-P\log_2 P
-(1-P)\log_2\frac{1-P}{V-1}.
\]
The retrospective estimate used throughout the cross-study comparison is
then
\[
\widehat I_{\mathrm{OVMI}}
=
C(S)\left[H(q_S)-h_V(P)\right].
\]
This approximation assumes only that all supported words have the same
correct-decoding probability and that errors are distributed uniformly
among the remaining $V-1$ outputs. The reference distribution itself
remains non-uniform through $C(S)$, $p_S$, and $q_S$.

Where available, we use macro rather than micro accuracy for $P$. In
particular,
\[
P_{\mathrm{macro}}
=
\frac{1}{V}\sum_{x\in S}
    \Pr(\hat Y=x\mid X=x),
\]
which summarises the reliability of an average supported word without
additionally weighting common words by their frequency. Frequency is
already represented by the external distribution $p$.

\subsection{Conversion of Reported Performance Measures}
\label{app:reported-metrics}

\paragraph{Balanced top-1 accuracy.}
For the non-invasive word-classification results, the reported balanced
top-1 accuracy is the macro-average of the class-wise recalls.

\paragraph{Isolated-word accuracy.}
For studies reporting accuracy on an isolated $V$-way word-classification
task, we use the reported mean correct-classification probability directly
as $P$. This gives $P=0.471$ for the isolated 50-word experiment of
\citet{Moses2021NeuroprosthesisFD} and $P=0.940$ for the corresponding
50-word result of \citet{Willett2023AHS}.

\paragraph{Word error rate.}
Continuous speech decoders generally report WER rather than word
classification accuracy. For a reference containing $N$ words, with
$N_{\mathrm{sub}}$ substitutions, $N_{\mathrm{del}}$ deletions, and
$N_{\mathrm{ins}}$ insertions,
\[
\mathrm{WER}
=
\frac{
N_{\mathrm{sub}}+N_{\mathrm{del}}+N_{\mathrm{ins}}
}{N}.
\]
The proportion of intended words that are neither substituted nor deleted
is
\[
P_{\mathrm{correct}}
=
1-\frac{N_{\mathrm{sub}}+N_{\mathrm{del}}}{N}
=
1-\mathrm{WER}
+\frac{N_{\mathrm{ins}}}{N}.
\]
Hence
\[
P_{\mathrm{correct}}\geq 1-\mathrm{WER}.
\]
When only WER is available, we therefore set
\[
P=1-\mathrm{WER}.
\]
This is a conservative estimate of per-intended-word correctness because
insertions increase WER without corresponding to a failure to recover an
intended word. For the language-model-assisted results, this gives
$P=0.744$ for Moses, $P=0.909$ and $0.762$ for the 50-word and 125k-word
Willett systems, respectively, and $P=0.975$ for Card.

\subsection{Reference Distributions}
\label{app:reference-details}

All reference distributions are unigram distributions over word types.
For a corpus-derived reference, we estimate
\[
p(w)=\frac{n(w)}{\sum_{u\in\Omega}n(u)},
\]
where $n(w)$ is the number of occurrences of word $w$ in the reference
corpus after lower-casing all words. For a decoder vocabulary $S$, words
outside $S$ remain part of the reference distribution and contribute to
the uncovered probability mass $1-C(S)$; $p$ is therefore never
renormalised to the decoder vocabulary before computing coverage.

\paragraph{Broad spoken English: SUBTLEX-UK.}
Our default reference is SUBTLEX-UK
\citep{vanHeuven2014SubtlexUKAN}, a word-frequency norm derived from
British film and television subtitles. We normalise the SUBTLEX-UK
frequency counts to obtain $p$. This distribution has entropy
$H(p)=9.77$ bits after the preprocessing used in our evaluation and is
intended as a broad spoken-English reference.

\paragraph{Conversation: Switchboard.}
For conversational speech, we construct an empirical unigram distribution
from the 36-call Switchboard sample distributed with NLTK. Word counts are
aggregated across the sample and normalised to sum to one. The resulting
reference has entropy $H(p)=8.27$ bits.

\paragraph{AAC: Universal Core Vocabulary.}
For an assistive communication reference, we use the 36-word Universal
Core Vocabulary (UCV) of \citet{erickson2019universal}. UCV specifies a
set of words rather than a probability distribution. We therefore assign
each UCV word its corresponding SUBTLEX-UK frequency and renormalise
within the UCV set,
\[
p_{\mathrm{UCV}}(w)
=
\frac{f_{\mathrm{SUBTLEX}}(w)}
{\sum_{u\in\mathrm{UCV}}f_{\mathrm{SUBTLEX}}(u)},
\qquad w\in\mathrm{UCV}.
\]
This preserves natural differences in word frequency rather than treating
the 36 words as equiprobable. The resulting distribution has
$H(p)=4.30$ bits.

\looseness=-1 \paragraph{Narrative speech: Sherlock.}
For narrative prose, we use the held-out LibriBrain100 text corresponding to
Session~12, Chapter~1 of \textit{The Adventures of Sherlock Holmes} \citep{doyle1892sherlock}.
This session belongs to the LibriBrain100 test set and is not used to fit the
non-invasive decoders. We count the words occurring in the chapter and
normalise their empirical frequencies to form the narrative reference
distribution. The resulting distribution has entropy $H(p)=8.44$ bits.
This reference is intentionally narrower than SUBTLEX-UK and represents
the narrative-speech domain on which several of the non-invasive systems
are evaluated.

\subsection{Normalisation and Uncertainty}
\label{app:normalisation}

OVMI is reported in bits per intended word drawn from $p$. When comparing
performance across different reference distributions, we additionally
normalise by the entropy of that distribution,
\[
\mathrm{OVMI}_{\%}
=
100\,
\frac{I_{\mathrm{OVMI}}}{H(p)}.
\]
This quantity is the percentage of lexical information available under
the stated reference distribution that is conveyed by the system.

For invasive isolated-word results, uncertainty in the reported accuracy
is propagated through the OVMI estimator using Wilson score intervals.
For WER-derived results, we propagate the corresponding published
sentence- or trial-level uncertainty through $P=1-\mathrm{WER}$. For the
non-invasive experiments, we compute OVMI at the mean balanced accuracy
and at one standard error above and below that mean across training seeds.

\subsection{Large-Vocabulary Invasive Systems}
The 125k-word decoder lexicons used by \citet{Willett2023AHS} and
\citet{Card2024AnAA} come from a lexicon based on the CMU Pronouncing Dictionary, but are not published in full as far as we are aware. For coverage calculations, we use the CMU Pronouncing Dictionary ourselves, after collapsing pronunciation variants to unique orthographic
word forms. This yields 126,052 unique spellings. We retain the reported vocabulary size $V=125{,}000$ when evaluating the decoder channel.

\section{Vocabulary Optimisation Experiment Details}
\label{app:optimise}

\looseness=-1 \paragraph{Data and splits.}
We use subject 0 from LibriBrain100 and report matched-domain results for TIMIT,
Podcasts, and Sherlock. Reference word distributions are estimated exclusively
from the training portion of the corresponding domain. For Podcasts, sessions
1--28 form the training set, session 29 is validation, and session 30 is test.
For Sherlock, the training set comprises the training runs, with
\texttt{Sherlock1} session 11 used for validation and session 12 for test.
TIMIT is split at the utterance level using the canonical LibriBrain100 policy:
the 50 development speakers are used for validation and the 24 core-test
speakers for testing. The repeated SA utterances are excluded from validation
and test, as are training speakers sharing an SX sentence with a core-test
speaker. MOCHA-TIMIT training sessions A and D contribute only to decoder
training and candidate-word availability.

\paragraph{Candidate vocabulary and neural decoder.}
The candidate pool is constructed from word tokens with usable neural training
windows after removal of incomplete examples. Words must occur at
least five times in the pooled neural training data. Eligible words are ordered
by decreasing training count and the pool is capped
at 250 words. The same candidate pool is used for all domains, selection
methods, vocabulary sizes, and model seeds.

We train five independently initialised instances of the contrastive
brain-to-T5-embedding decoder by \citet{dAscoli2025TowardsDI} for 50 epochs with a batch size of 128. For each seed, the
checkpoint is selected using pooled validation contrastive top-1 accuracy. The model has no
vocabulary-specific classification head. After training, we construct one
normalised target embedding for each candidate word and cache the cosine score
between every validation or test neural prediction and all 250 candidates.
Consequently, evaluating a vocabulary requires only restricting the cached
score matrix to the corresponding columns and applying an argmax; the neural
model is never retrained for a particular method or vocabulary size.

\paragraph{Reference distribution.}
For domain \(d\), let \(p_d(w)\) denote the empirical probability of word \(w\)
in that domain's training text. Probabilities are defined over the full
training-text distribution rather than being renormalised over the 250-word
candidate pool. Thus, for a selected vocabulary \(\mathcal{V}\), its lexical
coverage is
\[
    C_d(\mathcal{V})
    = \sum_{w\in\mathcal{V}} p_d(w).
\]
No validation or test text contributes to \(p_d\).

To compute OVMI, a full confusion-channel estimate is used when every word in the trial
vocabulary has at least five validation examples. Otherwise, selection
falls back to the per-word symmetric-error OVMI approximation. At \(V=150\), this fallback was used for all five TIMIT
seeds and for Sherlock seeds 0, 3, and 5; all other final domain--size--seed
combinations used the full validation channel.

\paragraph{Statistical testing.}
For every domain and vocabulary size, OVMI and Frequency are
compared using a one-sided paired randomisation test.
We use \(B=100{,}000\) Monte Carlo permutations and calculate the one-sided
\(p\)-value with the finite-simulation correction
\[
    p_{\mathrm{raw}}
    =
    \frac{
        1+\sum_{b=1}^{B}\mathbf{1}[T_b\geq T_{\mathrm{obs}}]
    }{B+1}.
\]

Each \(V\) is tested separately. Holm's step-down correction is then applied
across the nine vocabulary sizes within each domain, controlling the error rate separately for TIMIT, Podcasts, and Sherlock.

\makeatletter
\if@preprint
\else
\section{Broader Impacts}
\label{app:impacts}

\looseness=-1 Restoring speech to paralysed patients could improve their quality of life. At present, clinical deployment remains distant and substantial work in signal quality, decoding accuracy, and user-centred design ought to precede any deployment. The framework introduced here is one step within that longer trajectory. We also note that neural decoding technologies raise privacy concerns, as they involve inferring linguistic content from brain activity. The present work uses only publicly available research datasets collected under their own ethics approvals. As decoding capabilities improve, the field will need norms around informed consent, data ownership, and the boundary between assistive and surveillant applications.
\fi
\makeatother

\makeatletter
\if@preprint\else
  \clearpage
  \input{checklist.tex}
\fi
\makeatother

\end{document}

%% file: sections/intro_new.tex
\section{Introduction}
\label{sec:introduction}

\looseness=-1 In the latter half of the 1980s, DARPA proposed a speech recognition competition in which different systems transcribed the same held-out recordings and were ranked by a single score, settling the long-standing dispute between statistical methods and hand-written rules~\citep{Pallett2003ALA,Donoho201750YO,koch2024protoscience}. Standardised benchmarks built on this principle, with a shared task, dataset, and metric, have since underpinned progress across much of the machine learning field \citep{deng2009, Mnih2015HumanlevelCT, wang2019}. In the same vein, the field of speech BCIs has begun to adopt the same approach, with new benchmarks enabling controlled comparisons of methods that decode neural activity into speech~\citep{willett2024brain, brain-to-text-25, landau2025pnpl, mantegna2026pnpl}.
However, across the wider field, studies cannot all be evaluated with a common benchmark because methods are often specific to different experimental settings. Moreover, speech BCIs typically support different vocabularies of words, so studies report scores that are conditional on the supported vocabulary. As a result, scores from different studies are hard to compare.

If the vocabulary supported by a speech BCI is sufficiently large, then it can express most plausible sentences and the choice of supported words matters little. 
In practice, however, smaller constrained vocabularies are more typical of current systems~\citep{Moses2021NeuroprosthesisFD, Willett2023AHS, ozdogan2025libribrain, dAscoli2025TowardsDI, jayalath2026meg}, where studies construct vocabularies in one of two ways. Some select the most frequent $V$ words
from the available data \citep{dAscoli2025TowardsDI, jayalath2026meg}; others curate a selection of words for a particular use case~\citep{Moses2021NeuroprosthesisFD}.
Consequently, two systems with the same vocabulary size and accuracy may support very different fractions of what a user wishes to communicate. Conversely, imposing the same vocabulary across datasets would create an unnatural and unfair comparison as those words may occur with very different frequencies, or not at all. Thus, neither standardising a universal vocabulary nor using existing study-specific vocabularies provide a satisfactory way to compare studies.

\looseness=-1 More fundamentally, no benchmark is able to capture the full range of speech BCI studies. Systems differ in recording method, from intracortical implants to non-invasive MEG and EEG; in speech task, e.g. attempted or perceived speech; in participant population, e.g. healthy volunteers or paralysed patients; and experimental protocol. These differences often make evaluation on a shared neural dataset impossible. Thus, benchmarks provide controlled comparisons of methods within an experimental setting but do not by themselves tell us how capabilities measured in different settings relate to one another. For example, an intracortical attempted-speech decoder and a non-invasive perceived-speech decoder cannot be compared as if they were evaluated under the same conditions, yet we may still wish to ask how much communicative capability each demonstrates relative to the same communication objective. This question is also useful retrospectively, because many studies fall outside of the scope of a benchmark, and prospectively, since benchmarks may eventually saturate or be superseded. Therefore, a measure of communication should be meaningful across studies and successive benchmarks without depending on any one benchmark staying relevant.

\looseness=-1 
A benchmark also evaluates speech decoding on a particular language distribution. For example, the 2025 PNPL competition used Sherlock Holmes stories \citep{landau2025pnpl}. Repeated optimisation against such a benchmark may favour methods that decode only this language distribution well. We may instead want to know what a method's decoding capability measured on a benchmark means relative to other communication distributions, such as conversational or caregiving speech. Collecting a new neural dataset for every such communication objective would be impractical. We therefore need to separate the language distribution used to measure decoding capability from the language distribution against which communication is assessed. Resolving these issues requires answering two underlying questions:
\vspace{0.5\baselineskip}
\begin{thesisbox}
\textbf{Unanswered questions for measuring progress in speech BCIs}\\[4pt]
1. \underline{What distribution} (over words) should a speech BCI enable a user to communicate?\\
2. \underline{How much information} from that distribution can a speech BCI convey?
\end{thesisbox}
\vspace{0.5\baselineskip}
\begin{figure}[t]
    \centering
    \includegraphics[width=1.0\linewidth]{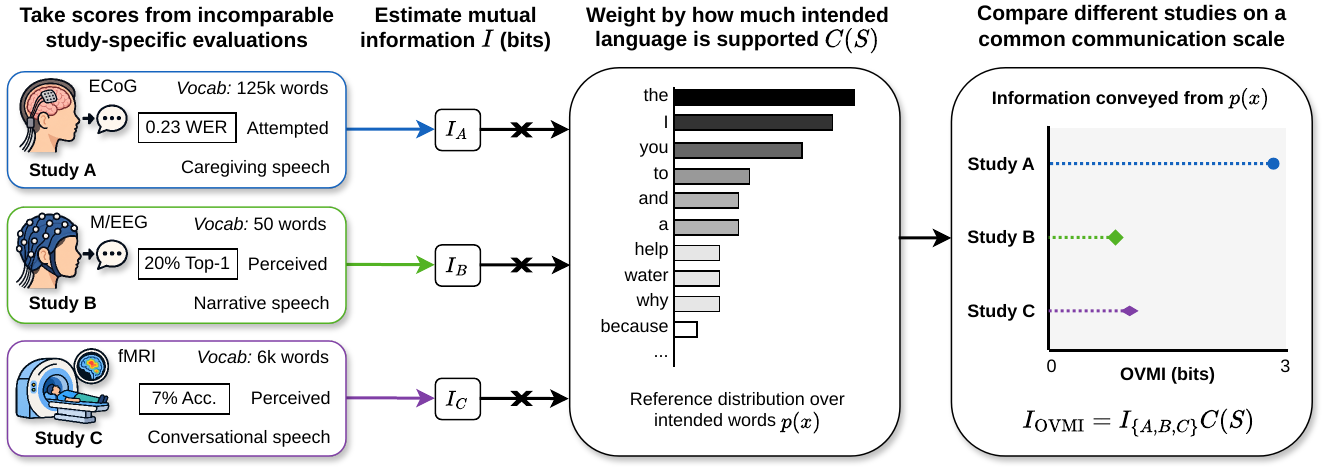}
    \caption{\looseness=-1 \textbf{OVMI maps incomparable evaluations onto a common scale.} Speech BCI studies differ in experimental setting, supported vocabulary, and test data, so their reported scores are not comparable. Each study provides a decoding score over its supported vocabulary $S$, from which we estimate the information conveyed within that vocabulary, $I$. We define a common reference distribution $p(x)$ over the words a user may wish to communicate, and let $C(S)$ denote the probability that a word from $p(x)$ is supported by the system. OVMI weights the decoded information $I$ by $C(S)$, yielding a score that measures performance against $p(x)$. Evaluating all systems against the same $p(x)$ therefore allows comparing heterogeneous speech BCI studies on a common communication scale.}
    \label{fig:concept}
    \vspace{-1em}
\end{figure}

\looseness=-1 \textbf{(1) Defining the communication distribution.}\quad
A speech BCI should ideally allow a user to convey what they would otherwise say aloud. To illustrate the problem, a decoder that is perfectly accurate on a fifty-word vocabulary succeeds only insofar as those fifty words capture what the user wants to say, and a vocabulary suitable for caregiving may be poorly suited to conversation. Moreover, the words a user intends are not uniformly distributed. We therefore model the distribution of words a user wants to communicate as a \defterm{reference distribution~$p$}. This makes the target communication domain explicit and provides a common distribution against which any decoder can be assessed. Thus, the amount of information a decoder conveys depends on both the decoder and the distribution of intended messages.

\looseness=-1 \textbf{(2) Measuring communication against a reference distribution.}\quad
Having specified what a user may wish to communicate through $p$, we ask how much of that communication a decoder can convey.
While accuracy and word error rate (WER) measure decoding fidelity, they remain conditional on the vocabulary supported by the decoder. Moreover, an error rate alone does not quantify how much information has been conveyed as, for example, the same classification accuracy resolves very different amounts of uncertainty when choosing among a vocabulary of ten words versus one thousand. Mutual information provides a natural quantity by measuring how much observing the decoder output reduces uncertainty about the intended message.

In BCIs, the standard information-theoretic measure is Wolpaw's information transfer rate and its refinements~\citep{Wolpaw1998EEGbasedCI,Wolpaw2002BraincomputerIF,Speier2013EvaluatingTB}. These measures quantify information within a predefined set of symbols. This is natural for classical closed-symbol interfaces, such as character spellers~\citep{Farwell1988TalkingOT}, but not when a user may intend words outside of a decoder's vocabulary. 
We therefore derive \underline{open-vocabulary mutual information} (\defterm{OVMI}), which evaluates lexical information when the intended word is drawn from $p$.
As Section~\ref{sec:ovmi-theory} shows,
OVMI emerges naturally from the decomposition of mutual information where OVMI is in-vocabulary mutual information weighted by the \defterm{lexical coverage} of $p$, the probability that an intended word is supported.
Evaluating systems against the same \(p\) therefore
allows heterogeneous studies to be compared on a common scale (Figure~\ref{fig:concept}).

\looseness=-1
\textbf{Contributions.}\quad
We (i) show why
conventional measures can overstate communication and derive OVMI, an information-theoretic measure of lexical information
transfer relative to an explicit reference communication distribution
(Sections~\ref{sec:ovmi-theory}--\ref{sec:overestimate}); (ii) use OVMI to
compare heterogeneous speech BCI systems on a common scale, revealing how much systems depend on lexical coverage, decoding fidelity, and
the intended communication domain (Sections~\ref{sec:common-scale}--\ref{sec:refdist});
and (iii) show that OVMI can also guide vocabulary selection, improving
speech BCI accuracy across three speech domains (Section~\ref{sec:optimise}).

%% file: checklist.tex
\section*{NeurIPS Paper Checklist}

\begin{enumerate}

\item {\bf Claims}
    \item[] Question: Do the main claims made in the abstract and introduction accurately reflect the paper's contributions and scope?
    \item[] Answer: \answerYes{} %
    \item[] Justification: We confirm that the stated contributions reflect the paper's scope as well as its theoretical and empirical contributions.
    \item[] Guidelines:
    \begin{itemize}
        \item The answer \answerNA{} means that the abstract and introduction do not include the claims made in the paper.
        \item The abstract and/or introduction should clearly state the claims made, including the contributions made in the paper and important assumptions and limitations. A \answerNo{} or \answerNA{} answer to this question will not be perceived well by the reviewers. 
        \item The claims made should match theoretical and experimental results, and reflect how much the results can be expected to generalize to other settings. 
        \item It is fine to include aspirational goals as motivation as long as it is clear that these goals are not attained by the paper. 
    \end{itemize}

\item {\bf Limitations}
    \item[] Question: Does the paper discuss the limitations of the work performed by the authors?
    \item[] Answer: \answerYes{} %
    \item[] Justification: We explicitly discuss limitations at length in Section~\ref{sec:discussion}.
    \item[] Guidelines:
    \begin{itemize}
        \item The answer \answerNA{} means that the paper has no limitation while the answer \answerNo{} means that the paper has limitations, but those are not discussed in the paper. 
        \item The authors are encouraged to create a separate ``Limitations'' section in their paper.
        \item The paper should point out any strong assumptions and how robust the results are to violations of these assumptions (e.g., independence assumptions, noiseless settings, model well-specification, asymptotic approximations only holding locally). The authors should reflect on how these assumptions might be violated in practice and what the implications would be.
        \item The authors should reflect on the scope of the claims made, e.g., if the approach was only tested on a few datasets or with a few runs. In general, empirical results often depend on implicit assumptions, which should be articulated.
        \item The authors should reflect on the factors that influence the performance of the approach. For example, a facial recognition algorithm may perform poorly when image resolution is low or images are taken in low lighting. Or a speech-to-text system might not be used reliably to provide closed captions for online lectures because it fails to handle technical jargon.
        \item The authors should discuss the computational efficiency of the proposed algorithms and how they scale with dataset size.
        \item If applicable, the authors should discuss possible limitations of their approach to address problems of privacy and fairness.
        \item While the authors might fear that complete honesty about limitations might be used by reviewers as grounds for rejection, a worse outcome might be that reviewers discover limitations that aren't acknowledged in the paper. The authors should use their best judgment and recognize that individual actions in favor of transparency play an important role in developing norms that preserve the integrity of the community. Reviewers will be specifically instructed to not penalize honesty concerning limitations.
    \end{itemize}

\item {\bf Theory assumptions and proofs}
    \item[] Question: For each theoretical result, does the paper provide the full set of assumptions and a complete (and correct) proof?
    \item[] Answer: \answerYes{} %
    \item[] Justification: We state all assumptions in Section~\ref{sec:ovmi-theory} and in Appendix~\ref{app:ovmi-proofs}. We provide proofs of all of our propositions and corollaries as well as further helpful derivations in Appendix~\ref{app:ovmi-proofs}.
    \item[] Guidelines:
    \begin{itemize}
        \item The answer \answerNA{} means that the paper does not include theoretical results. 
        \item All the theorems, formulas, and proofs in the paper should be numbered and cross-referenced.
        \item All assumptions should be clearly stated or referenced in the statement of any theorems.
        \item The proofs can either appear in the main paper or the supplemental material, but if they appear in the supplemental material, the authors are encouraged to provide a short proof sketch to provide intuition. 
        \item Inversely, any informal proof provided in the core of the paper should be complemented by formal proofs provided in appendix or supplemental material.
        \item Theorems and Lemmas that the proof relies upon should be properly referenced. 
    \end{itemize}

    \item {\bf Experimental result reproducibility}
    \item[] Question: Does the paper fully disclose all the information needed to reproduce the main experimental results of the paper to the extent that it affects the main claims and/or conclusions of the paper (regardless of whether the code and data are provided or not)?
    \item[] Answer: \answerYes{} %
    \item[] Justification: We describe our experimental setup in Section~\ref{sec:experiments} and provide simple Python pseudocode for calculating OVMI in Appendix~\ref{app:ovmi-pseudocode}.
    \item[] Guidelines:
    \begin{itemize}
        \item The answer \answerNA{} means that the paper does not include experiments.
        \item If the paper includes experiments, a \answerNo{} answer to this question will not be perceived well by the reviewers: Making the paper reproducible is important, regardless of whether the code and data are provided or not.
        \item If the contribution is a dataset and\slash or model, the authors should describe the steps taken to make their results reproducible or verifiable. 
        \item Depending on the contribution, reproducibility can be accomplished in various ways. For example, if the contribution is a novel architecture, describing the architecture fully might suffice, or if the contribution is a specific model and empirical evaluation, it may be necessary to either make it possible for others to replicate the model with the same dataset, or provide access to the model. In general. releasing code and data is often one good way to accomplish this, but reproducibility can also be provided via detailed instructions for how to replicate the results, access to a hosted model (e.g., in the case of a large language model), releasing of a model checkpoint, or other means that are appropriate to the research performed.
        \item While NeurIPS does not require releasing code, the conference does require all submissions to provide some reasonable avenue for reproducibility, which may depend on the nature of the contribution. For example
        \begin{enumerate}
            \item If the contribution is primarily a new algorithm, the paper should make it clear how to reproduce that algorithm.
            \item If the contribution is primarily a new model architecture, the paper should describe the architecture clearly and fully.
            \item If the contribution is a new model (e.g., a large language model), then there should either be a way to access this model for reproducing the results or a way to reproduce the model (e.g., with an open-source dataset or instructions for how to construct the dataset).
            \item We recognize that reproducibility may be tricky in some cases, in which case authors are welcome to describe the particular way they provide for reproducibility. In the case of closed-source models, it may be that access to the model is limited in some way (e.g., to registered users), but it should be possible for other researchers to have some path to reproducing or verifying the results.
        \end{enumerate}
    \end{itemize}

\item {\bf Open access to data and code}
    \item[] Question: Does the paper provide open access to the data and code, with sufficient instructions to faithfully reproduce the main experimental results, as described in supplemental material?
    \item[] Answer: \answerYes{} %
    \item[] Justification: We provide code and instructions for reproduction of our experiments in the supplementary materials. We will release this publicly after the review process.
    \item[] Guidelines:
    \begin{itemize}
        \item The answer \answerNA{} means that paper does not include experiments requiring code.
        \item Please see the NeurIPS code and data submission guidelines (\url{https://neurips.cc/public/guides/CodeSubmissionPolicy}) for more details.
        \item While we encourage the release of code and data, we understand that this might not be possible, so \answerNo{} is an acceptable answer. Papers cannot be rejected simply for not including code, unless this is central to the contribution (e.g., for a new open-source benchmark).
        \item The instructions should contain the exact command and environment needed to run to reproduce the results. See the NeurIPS code and data submission guidelines (\url{https://neurips.cc/public/guides/CodeSubmissionPolicy}) for more details.
        \item The authors should provide instructions on data access and preparation, including how to access the raw data, preprocessed data, intermediate data, and generated data, etc.
        \item The authors should provide scripts to reproduce all experimental results for the new proposed method and baselines. If only a subset of experiments are reproducible, they should state which ones are omitted from the script and why.
        \item At submission time, to preserve anonymity, the authors should release anonymized versions (if applicable).
        \item Providing as much information as possible in supplemental material (appended to the paper) is recommended, but including URLs to data and code is permitted.
    \end{itemize}

\item {\bf Experimental setting/details}
    \item[] Question: Does the paper specify all the training and test details (e.g., data splits, hyperparameters, how they were chosen, type of optimizer) necessary to understand the results?
    \item[] Answer: \answerYes{} %
    \item[] Justification: We provide all details on our data splits and hyperparameters in Appendix~\ref{app:dsplits-hparams}.
    \item[] Guidelines:
    \begin{itemize}
        \item The answer \answerNA{} means that the paper does not include experiments.
        \item The experimental setting should be presented in the core of the paper to a level of detail that is necessary to appreciate the results and make sense of them.
        \item The full details can be provided either with the code, in appendix, or as supplemental material.
    \end{itemize}

\item {\bf Experiment statistical significance}
    \item[] Question: Does the paper report error bars suitably and correctly defined or other appropriate information about the statistical significance of the experiments?
    \item[] Answer: \answerYes{} %
    \item[] Justification: We show standard error over five seeds with error bars or shading on all plots.
    \item[] Guidelines:
    \begin{itemize}
        \item The answer \answerNA{} means that the paper does not include experiments.
        \item The authors should answer \answerYes{} if the results are accompanied by error bars, confidence intervals, or statistical significance tests, at least for the experiments that support the main claims of the paper.
        \item The factors of variability that the error bars are capturing should be clearly stated (for example, train/test split, initialization, random drawing of some parameter, or overall run with given experimental conditions).
        \item The method for calculating the error bars should be explained (closed form formula, call to a library function, bootstrap, etc.)
        \item The assumptions made should be given (e.g., Normally distributed errors).
        \item It should be clear whether the error bar is the standard deviation or the standard error of the mean.
        \item It is OK to report 1-sigma error bars, but one should state it. The authors should preferably report a 2-sigma error bar than state that they have a 96\% CI, if the hypothesis of Normality of errors is not verified.
        \item For asymmetric distributions, the authors should be careful not to show in tables or figures symmetric error bars that would yield results that are out of range (e.g., negative error rates).
        \item If error bars are reported in tables or plots, the authors should explain in the text how they were calculated and reference the corresponding figures or tables in the text.
    \end{itemize}

\item {\bf Experiments compute resources}
    \item[] Question: For each experiment, does the paper provide sufficient information on the computer resources (type of compute workers, memory, time of execution) needed to reproduce the experiments?
    \item[] Answer: \answerYes{} %
    \item[] Justification: We describe these details at the end of Appendix~\ref{app:dsplits-hparams}.
    \item[] Guidelines:
    \begin{itemize}
        \item The answer \answerNA{} means that the paper does not include experiments.
        \item The paper should indicate the type of compute workers CPU or GPU, internal cluster, or cloud provider, including relevant memory and storage.
        \item The paper should provide the amount of compute required for each of the individual experimental runs as well as estimate the total compute. 
        \item The paper should disclose whether the full research project required more compute than the experiments reported in the paper (e.g., preliminary or failed experiments that didn't make it into the paper). 
    \end{itemize}
    
\item {\bf Code of ethics}
    \item[] Question: Does the research conducted in the paper conform, in every respect, with the NeurIPS Code of Ethics \url{https://neurips.cc/public/EthicsGuidelines}?
    \item[] Answer: \answerYes{} %
    \item[] Justification: This research follows and complies with the NeurIPS Code of Ethics.
    \item[] Guidelines:
    \begin{itemize}
        \item The answer \answerNA{} means that the authors have not reviewed the NeurIPS Code of Ethics.
        \item If the authors answer \answerNo, they should explain the special circumstances that require a deviation from the Code of Ethics.
        \item The authors should make sure to preserve anonymity (e.g., if there is a special consideration due to laws or regulations in their jurisdiction).
    \end{itemize}

\item {\bf Broader impacts}
    \item[] Question: Does the paper discuss both potential positive societal impacts and negative societal impacts of the work performed?
    \item[] Answer: \answerYes{} %
    \item[] Justification: We discuss broader impacts of our research in Appendix~\ref{app:impacts}.
    \item[] Guidelines:
    \begin{itemize}
        \item The answer \answerNA{} means that there is no societal impact of the work performed.
        \item If the authors answer \answerNA{} or \answerNo, they should explain why their work has no societal impact or why the paper does not address societal impact.
        \item Examples of negative societal impacts include potential malicious or unintended uses (e.g., disinformation, generating fake profiles, surveillance), fairness considerations (e.g., deployment of technologies that could make decisions that unfairly impact specific groups), privacy considerations, and security considerations.
        \item The conference expects that many papers will be foundational research and not tied to particular applications, let alone deployments. However, if there is a direct path to any negative applications, the authors should point it out. For example, it is legitimate to point out that an improvement in the quality of generative models could be used to generate Deepfakes for disinformation. On the other hand, it is not needed to point out that a generic algorithm for optimizing neural networks could enable people to train models that generate Deepfakes faster.
        \item The authors should consider possible harms that could arise when the technology is being used as intended and functioning correctly, harms that could arise when the technology is being used as intended but gives incorrect results, and harms following from (intentional or unintentional) misuse of the technology.
        \item If there are negative societal impacts, the authors could also discuss possible mitigation strategies (e.g., gated release of models, providing defenses in addition to attacks, mechanisms for monitoring misuse, mechanisms to monitor how a system learns from feedback over time, improving the efficiency and accessibility of ML).
    \end{itemize}
    
\item {\bf Safeguards}
    \item[] Question: Does the paper describe safeguards that have been put in place for responsible release of data or models that have a high risk for misuse (e.g., pre-trained language models, image generators, or scraped datasets)?
    \item[] Answer: \answerNA{} %
    \item[] Justification: We do not release any new models or data.
    \item[] Guidelines:
    \begin{itemize}
        \item The answer \answerNA{} means that the paper poses no such risks.
        \item Released models that have a high risk for misuse or dual-use should be released with necessary safeguards to allow for controlled use of the model, for example by requiring that users adhere to usage guidelines or restrictions to access the model or implementing safety filters. 
        \item Datasets that have been scraped from the Internet could pose safety risks. The authors should describe how they avoided releasing unsafe images.
        \item We recognize that providing effective safeguards is challenging, and many papers do not require this, but we encourage authors to take this into account and make a best faith effort.
    \end{itemize}

\item {\bf Licenses for existing assets}
    \item[] Question: Are the creators or original owners of assets (e.g., code, data, models), used in the paper, properly credited and are the license and terms of use explicitly mentioned and properly respected?
    \item[] Answer: \answerYes{} %
    \item[] Justification: We cite all papers that introduce code, models, or datasets used in this work. We include the license terms in the reference for datasets.
    \item[] Guidelines:
    \begin{itemize}
        \item The answer \answerNA{} means that the paper does not use existing assets.
        \item The authors should cite the original paper that produced the code package or dataset.
        \item The authors should state which version of the asset is used and, if possible, include a URL.
        \item The name of the license (e.g., CC-BY 4.0) should be included for each asset.
        \item For scraped data from a particular source (e.g., website), the copyright and terms of service of that source should be provided.
        \item If assets are released, the license, copyright information, and terms of use in the package should be provided. For popular datasets, \url{paperswithcode.com/datasets} has curated licenses for some datasets. Their licensing guide can help determine the license of a dataset.
        \item For existing datasets that are re-packaged, both the original license and the license of the derived asset (if it has changed) should be provided.
        \item If this information is not available online, the authors are encouraged to reach out to the asset's creators.
    \end{itemize}

\item {\bf New assets}
    \item[] Question: Are new assets introduced in the paper well documented and is the documentation provided alongside the assets?
    \item[] Answer: \answerYes{} %
    \item[] Justification: We provide a README file with instructions to reproduce our experiments in the supplementary materials.
    \item[] Guidelines:
    \begin{itemize}
        \item The answer \answerNA{} means that the paper does not release new assets.
        \item Researchers should communicate the details of the dataset\slash code\slash model as part of their submissions via structured templates. This includes details about training, license, limitations, etc. 
        \item The paper should discuss whether and how consent was obtained from people whose asset is used.
        \item At submission time, remember to anonymize your assets (if applicable). You can either create an anonymized URL or include an anonymized zip file.
    \end{itemize}

\item {\bf Crowdsourcing and research with human subjects}
    \item[] Question: For crowdsourcing experiments and research with human subjects, does the paper include the full text of instructions given to participants and screenshots, if applicable, as well as details about compensation (if any)? 
    \item[] Answer: \answerNA{} %
    \item[] Justification: We do not collect any new data. All experiments use data from cited publicly available datasets with their own ethics approvals.
    \item[] Guidelines:
    \begin{itemize}
        \item The answer \answerNA{} means that the paper does not involve crowdsourcing nor research with human subjects.
        \item Including this information in the supplemental material is fine, but if the main contribution of the paper involves human subjects, then as much detail as possible should be included in the main paper. 
        \item According to the NeurIPS Code of Ethics, workers involved in data collection, curation, or other labor should be paid at least the minimum wage in the country of the data collector. 
    \end{itemize}

\item {\bf Institutional review board (IRB) approvals or equivalent for research with human subjects}
    \item[] Question: Does the paper describe potential risks incurred by study participants, whether such risks were disclosed to the subjects, and whether Institutional Review Board (IRB) approvals (or an equivalent approval/review based on the requirements of your country or institution) were obtained?
    \item[] Answer: \answerNA{} %
    \item[] Justification: We do not collect any new data. All experiments use data from cited publicly available datasets with their own ethics approvals.
    \item[] Guidelines:
    \begin{itemize}
        \item The answer \answerNA{} means that the paper does not involve crowdsourcing nor research with human subjects.
        \item Depending on the country in which research is conducted, IRB approval (or equivalent) may be required for any human subjects research. If you obtained IRB approval, you should clearly state this in the paper. 
        \item We recognize that the procedures for this may vary significantly between institutions and locations, and we expect authors to adhere to the NeurIPS Code of Ethics and the guidelines for their institution. 
        \item For initial submissions, do not include any information that would break anonymity (if applicable), such as the institution conducting the review.
    \end{itemize}

\item {\bf Declaration of LLM usage}
    \item[] Question: Does the paper describe the usage of LLMs if it is an important, original, or non-standard component of the core methods in this research? Note that if the LLM is used only for writing, editing, or formatting purposes and does \emph{not} impact the core methodology, scientific rigor, or originality of the research, declaration is not required.
    \item[] Answer: \answerYes{} %
    \item[] Justification: We declare non-standard use of LLMs for assisting in sketching proofs during the development of this work. We describe this at the end of Appendix~\ref{app:dsplits-hparams}. We independently verified these proof outlines by hand for correctness and rigour, then developed them into the full proofs in Appendix~\ref{app:ovmi-proofs}.
    \item[] Guidelines:
    \begin{itemize}
        \item The answer \answerNA{} means that the core method development in this research does not involve LLMs as any important, original, or non-standard components.
        \item Please refer to our LLM policy in the NeurIPS handbook for what should or should not be described.
    \end{itemize}

\end{enumerate}